\relax
\documentclass[letterpaper]{article} 
\usepackage{calc}
\usepackage{aaai18}  
\usepackage{times}  
\usepackage{helvet}  
\usepackage{courier}  
\usepackage{url}  
\usepackage{graphicx}  

\usepackage{mathrsfs}
\usepackage{amsfonts}
\usepackage{amsmath, amssymb,amsthm}
\usepackage[shortlabels]{enumitem}
\usepackage{calc}
\newtheorem{theorem}{Theorem}[section]
\newtheorem{corollary}[theorem]{Corollary}
\newtheorem{lemma}[theorem]{Lemma}

\newtheorem{remark}[theorem]{Remark}

\frenchspacing  
\setlength{\pdfpagewidth}{8.5in}  
\setlength{\pdfpageheight}{11in}  

\newcommand{\ds}{\textnormal{d}s}
\newcommand{\Kp}{K_p}
\newcommand{\Kb}{K_b}
\newcommand{\ExP}{\mathbb{E}}
\newcommand{\lmax}{\lambda_{\max}}
\newcommand{\lmin}{\lambda_{\min}}

\newcommand{\bE}{\mathbb{E}}
\newcommand{\cF}{\mathcal{F}}

\newcommand{\dReal}{\mathbb{R}^d}
\newcommand{\Real}{\mathbb{R}}

\newcommand{\thS}{\theta^*}

\newcommand{\st}{\alpha}
\newcommand{\df}{\mathrm{d}}

\newcommand{\Id}{\mathbb{I}}

\newcommand{\bart}{\bar{\theta}}

\newcommand{\tI}[1]{t_{#1}}
\newcommand{\tr}{\top}

\newcommand{\cE}{\mathcal{E}}

\newcommand{\cA}{\mathcal{A}}
\newcommand{\Ro}{R_{\mathrm{wc}}}

\newcommand{\ei}{\epsilon}

\newcommand{\vop}{VoP }
\newcommand{\cS}{C_*}
\newcommand{\cR}{\mathcal{R}}

\newcommand{\eDb}{E^\text{d}_{[n_0,n+1]}}
\newcommand{\eDbp}{E^\text{d}_{[n',n+1]}}
\newcommand{\eMb}{E^\text{m}_{[n_0,n+1]}}
\newcommand{\Emid}{\cE_{n_0, n_1}^{\textnormal{mid}}}
\newcommand{\Eaft}{\cE_{n_0, n_1}^{\textnormal{after}}}

\newcommand{\sA}{\mathscr{A}}
\newcommand{\calS}{\mathcal{S}}
\newcommand{\Kl}{K_\lambda\,}
\newcommand{\Km}{K_m\,}

\newcommand{\cMb}{C_{m2}}
\newcommand{\Gnd}{G_{n_0, n}}
\newcommand{\nMid}{n_\textnormal{c}}

\usepackage{cancel}
\usepackage{xcolor}
\definecolor{OliveGreen}{rgb}{0,0.6,0}

\newcommand{\RNum}[1]{\uppercase\expandafter{\romannumeral #1\relax}}

  \pdfinfo{
/Title (Finite Sample Analyses for TD(0) with Function Approximation)
/Author (Gal Dalal, Balazs Szorenyi, Gugan Thoppe, Shie Mannor)}
\setcounter{secnumdepth}{1}
 \begin{document}
%
\title{Finite Sample Analyses for TD(0) with Function Approximation}
\author{
  Gal Dalal\thanks{Equal contribution.} \\
  	Technion, Israel \\
	gald@campus.technion.ac.il\\
  \And
  Bal\'azs Sz\"or\'enyi$^*$ \\
  Oath (formerly Yahoo) Research\\
	szorenyi.balazs@gmail.com \\
  \And
  Gugan Thoppe$^*$ \\
  Duke University, USA \\
	gugan.thoppe@gmail.com
  \And
  Shie Mannor \\
  Technion, Israel\\
	shie@ee.technion.ac.il
 }

\maketitle

\begin{abstract}
TD(0) is one of the most commonly used algorithms in reinforcement learning. Despite this, there is no existing finite sample analysis for TD(0) with function approximation, even for the linear case. Our work is the first to provide such results. Existing convergence rates for Temporal Difference (TD) methods apply only to somewhat modified versions, e.g., projected variants or ones where stepsizes depend on unknown problem parameters. Our analyses obviate these artificial alterations by exploiting strong properties of TD(0). We provide convergence rates both in expectation and with high-probability. The two are obtained via different approaches that use relatively unknown, recently developed stochastic approximation techniques.
\end{abstract}

\section{Introduction}
\label{sec:intro}



Temporal Difference (TD) algorithms lie at the core of Reinforcement Learning (RL), dominated by the celebrated TD(0) algorithm. The term has been coined in \cite{SuttonBarto1998book}, describing an iterative process of updating an estimate of a value function $V^\pi(s)$ with respect to a given policy $\pi$ based on temporally-successive samples. The classical version of the algorithm uses a tabular representation, i.e., entry-wise storage of the value estimate per each state $s\in {\cal S}$. However, in many problems, the state-space ${\cal S}$ is too large for such a vanilla approach. The common practice to mitigate this caveat is to approximate the value function using some parameterized family. Often, linear regression is used, i.e., $V^\pi(s)\approx\theta^\top\phi(s)$. This allows for an efficient implementation of TD(0) even on large state-spaces and has shown to perform well in a variety of problems \cite{tesauro1995temporal,powell2007approximate}. More recently, TD(0) has become prominent in many state-of-the-art RL solutions when combined with deep neural network architectures, as an integral part of fitted value iteration \cite{mnih2015human,silver2016mastering}. In this work we focus on the former case of linear Function Approximation (FA); nevertheless, we consider this work as a preliminary milestone in route to achieving theoretical guarantees for non-linear RL architectures.

Two types of convergence rate results exist in literature: in expectation and with high probability. We stress that no results of either type exist for the actual, commonly used, TD(0) algorithm with linear FA; our work is the first to provide such results. In fact, it is the first work to give a convergence rate for an unaltered online TD algorithm of any type. We emphasize that TD(0) with linear FA is formulated and used with non-problem-specific stepsizes. Also, it does not require a projection step to keep $\theta$ in a `nice' set. In contrast, the few recent works that managed to provide convergence rates for TD(0) analyzed only altered versions of them. These modifications include a projection step and eigenvalue-dependent stepsizes, or they apply only to the average of iterates; we expand on this in the coming section. 

\subsection{Existing Literature}
The first TD(0) convergence result was obtained by \cite{tsitsiklis1997analysis} for both finite and infinite state-spaces. Following that, a key result by \cite{borkar2000ode} paved the path to a unified and convenient tool for convergence analyses of Stochastic Approximation (SA), and hence of TD algorithms. This tool is based on the Ordinary Differential Equation (ODE) method.  Essentially, that work showed that under the right conditions, the SA trajectory follows the solution of a suitable ODE, often referred to as its \emph{limiting ODE}; thus, it eventually converges to the solution of the limiting ODE. Several usages of this tool in RL literature can be found in \cite{sutton2009convergent,sutton2009fast,sutton2015emphatic}.

As opposed to the case of asymptotic convergence analysis of TD algorithms, very little is known about their finite sample behavior. We now briefly discuss the few existing results on this topic. In \cite{borkar2008stochastic}, a concentration bound is given for generic SA algorithms. Recent works \cite{kamal2010convergence,thoppe2015concentration} obtain better concentration bounds via tighter analyses. The results in these works are conditioned on the event that the $n_0-$th iterate lies in some a-priori chosen bounded region containing the desired equilibria; this, therefore, is the caveat in applying them to TD(0).

In \cite{korda2015td}, convergence rates for TD(0) with mixing-time consideration have been given. We note that even though doubts were recently raised regarding the correctness results there \cite{LaSze17:Rep}, we shall treat them as correct for the sake of discussion. The results in \cite{korda2015td} require the learning rate to be set based on prior knowledge about system dynamics, which, as argued in the paper, is problematic; alternatively, they apply to the average of iterates. Additionally, unlike in our work, a strong requirement for all high probability bounds  is that the iterates need to lie in some a-priori chosen bounded set; this is ensured there via projections (personal communication). In similar spirit, results  for TD(0) requiring prior knowledge about system parameters are also given in \cite{kondathesis}. An additional work by \cite{liu2015finite} considered the  gradient TD algorithms GTD(0) and GTD2, which were first introduced in \cite{sutton2009fast,sutton2009convergent}. That work interpreted the algorithms as gradient methods to some saddle-point optimization problem. This enabled them to obtain convergence rates on altered versions of these algorithms using results from the convex optimization literature. Despite the alternate approach, in a similar fashion to the results above, a projection step that keeps the parameter vectors in a convex set is needed there.

Bounds similar in flavor to ours are also given in \cite{frikha2012,fathi2013}. However, they apply only to a class of SA methods satisfying strong assumptions, which do not hold for  TD(0). In particular, neither the uniformly Lipschitz assumption nor its weakened version, the Lyapunov Stability-Domination criteria, hold for TD(0) when formulated in their iid noise setup. 

Three additional works \cite{yu2009convergence,lazaric2010finite,pan2017accelerated} provide sample complexity bounds on the batch LSTD algorithms. However, in the context of finite sample analysis, these belong to a different class of algorithms. The case of online TD learning has proved to be more practical, at the expense of increased analysis difficulty compared to LSTD methods.

\subsection{Our Contributions}
Our work is the first to give bounds on the convergence rate of TD(0) in its \emph{original, unaltered form}. In fact, it is the first to obtain convergence rate results for an unaltered online TD algorithm of any type.	
Indeed, as discussed earlier, existing convergence rates apply only to online TD algorithms with alterations such as projections and stepsizes dependent on unknown problem parameters; alternatively, they only apply to average of iterates.

The methodologies for obtaining the expectation and high probability bounds are quite different. The former has a short and elegant proof that follows via induction using a subtle trick from \cite{kamal2010convergence}. This bound applies to a general family of stepsizes that is not restricted to square-summable sequences, as usually was required by most previous works. This result reveals an explicit interplay between the stepsizes and noise. 

As for the key ingredients in proving our high-probability bound, we first show that the $n$-th iterate at worst is only $O(n)$ away from the solution $\thS$. Based on that, we then utilize tailor-made stochastic approximation tools to show that after some additional steps all subsequent iterates are $\ei$-close to the solution w.h.p. This novel analysis approach allows us to obviate the common alterations mentioned above.
Our key insight regards the role of the driving matrix's smallest eigenvalue $\lambda$. The convergence rate is dictated by it when it is below some threshold; for larger values, the rate is dictated by the noise.

We believe these two analysis approaches are not limited to TD(0) alone. 


\section{Problem Setup}
We consider the problem of policy evaluation for a Markov Decision Process (MDP). A MDP is defined by the 5-tuple $(\calS, \sA,P,\cR,\gamma)$ \cite{sutton1988learning}, where $\calS$ is the set of states, $\sA$ is the set of actions, $P = P(s'|s,a)$ is the transition kernel, $\cR(s,a,s')$ is the reward function, and $\gamma\in(0,1)$ is the discount factor. In each time-step, the process is in some state $s \in \calS$, an action $a \in \sA$ is taken, the system transitions to a next state $s'\in {\cal S}$ according to the transition kernel $P$, and an immediate reward $r$ is received according to $\cR(s,a,s')$. Let policy $\pi:{\cal S} \rightarrow \sA$ be a stationary mapping from states to actions. Assuming the associated Markov chain is ergodic and uni-chain, let $\nu$ be the induced stationary distribution. Moreover, let $V^\pi(s)$ be the value function at state $s$ w.r.t. $\pi$ defined via the Bellman equation $V^\pi(s) = \mathbb{E}_\nu[r+\gamma V^\pi(s')]$. In our policy evaluation setting, the goal is to estimate $V^\pi(s)$  using linear regression, i.e., $V^\pi(s)\approx\theta^\top\phi(s)$, where $\phi(s) \in \mathbb{R}^d$ is a feature vector at state $s$, and $\theta \in \mathbb{R}^d$ is a weight vector. For brevity, we omit the notation $\pi$ and denote $\phi(s),~\phi(s')$  by $\phi,~\phi'$.

Let $\{(\phi_n,\phi_n',r_n)\}_n$ be iid samples of $(\phi,\phi',r)$.\footnote{The iid assumption does not hold in practice; however, it is standard when dealing with convergence bounds in reinforcement learning \cite{liu2015finite,sutton2009convergent,sutton2009fast}. It allows for sophisticated and well-developed techniques from SA theory, and it is not clear how it can be avoided. Indeed, the few papers that obviate this assumption assume other strong properties such as exponentially-fast mixing time \cite{korda2015td,tsitsiklis1997analysis}. In practice, drawing samples from the stationary distribution is often simulated by taking the last sample from a long trajectory, even though knowing when to stop the trajectory is again a hard theoretical problem.
	Additionally, most recent implementations of TD algorithms use long replay buffers that shuffle samples. This reduces the correlation between the samples, thereby making our assumption more realistic.} Then the TD(0) algorithm has the update rule
\begin{equation}
\label{eqn:TD_Method}
\theta_{n + 1} = \theta_n + \alpha_n [r_n + \gamma \phi_n'^\tr \theta_n - \phi_n^\tr \theta_n] \phi_n,
\end{equation}
where $\alpha_n$ is the stepsize. For analysis, we can rewrite the above as
\begin{equation}\label{eq:SA_traj}
\theta_{n + 1} = \theta_n + \alpha_n [h(\theta_n) + M_{n + 1}] \enspace,
\end{equation}
where $h(\theta) = b - A \theta$ and
\begin{equation}
\label{Defn:MartDiffNoise}
M_{n+1}=\left(r_n +\gamma\phi_n'^\top\theta_n-\phi_n^\top\theta_n\right)\phi_n - [b -A \theta_n]
\enspace,
\end{equation}
with $A  = \bE_\nu[\phi(\phi-\gamma\phi')^\top]$ and $b = \bE_\nu[r\phi].$ It is known that $A$ is positive definite  \cite{bertsekas2012dynamic} and that  \eqref{eq:SA_traj} converges to $\thS := A^{-1}b$ \cite{borkar2008stochastic}. Note that
\begin{equation}
\label{Defn:h}
h(\theta) = -A [\theta - \thS] \enspace.
\end{equation}

\section{Main Results}

Our first main result is a bound on the expected decay rate of the TD(0) iterates. It requires the following assumption. 
	{\renewcommand*\theenumi{$\pmb{\cA_\arabic{enumi}}$}
		\begin{enumerate}[leftmargin=4ex+\labelsep]
			\item \label{assum:bounded second moments} 
			For some $K_s >0$,
			$$\ExP[\|M_{n+1}\|^2|{\cal F}_n] \leq K_s [1 + \|\theta_n - \thS\|^2].$$
			 
		\end{enumerate}
	}
This assumption follows from \eqref{Defn:MartDiffNoise} when, for example, $\{(\phi_n,\phi_n',r_n)\}_n$ have uniformly bounded second moments. The latter is a common assumption in such results; e.g., \cite{sutton2009fast,sutton2009convergent}. 
	
Recall that all eigenvalues of a symmetric matrix are real. For a symmetric matrix $X,$ let $\lambda_{\min}(X)$ and $\lambda_{\max}(X)$ be its minimum and maximum eigenvalues, respectively.

\begin{theorem}[Expected Decay Rate for TD(0)]
	\label{thm:ExPDecayRate}
	Fix $\sigma \in (0,1)$ and let $\alpha_n = (n + 1)^{-\sigma}.$ Fix $\lambda \in (0, \lmin(A + A^\tr)).$ Then, under \ref{assum:bounded second moments}, for $n \geq 1,$
	\[
	\ExP\|\theta_n - \theta^*\|^2 \leq K_1  e^{- (\lambda/2) n^{1 - \sigma}}  +  \frac{K_2}{n^\sigma},
	\]
	where $K_1, K_2 \geq 0$ are some constants that depend on both $\lambda$ and $\sigma;$ see \eqref{eq:K1} and \eqref{eq:K2} for the exact expressions.
\end{theorem}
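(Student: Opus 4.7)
The plan is to track $z_n := \theta_n - \thS$ and obtain a one-step mean-square recursion, then close it by induction. Since $h(\theta_n) = -A z_n$, the iteration rewrites as $z_{n+1} = (I - \alpha_n A) z_n + \alpha_n M_{n+1}$. Squaring and conditioning on ${\cal F}_n$, the martingale-difference property $\ExP[M_{n+1} \mid {\cal F}_n] = 0$ kills the cross term and yields
\[
\ExP[\|z_{n+1}\|^2 \mid {\cal F}_n] = z_n^\tr \bigl(I - \alpha_n(A+A^\tr) + \alpha_n^2 A^\tr A\bigr) z_n + \alpha_n^2 \ExP[\|M_{n+1}\|^2 \mid {\cal F}_n].
\]
Using the spectral bound $z_n^\tr (A+A^\tr) z_n \geq \lmin(A+A^\tr)\,\|z_n\|^2$, the operator norm bound on $A^\tr A$, and \ref{assum:bounded second moments}, I take total expectation and set $u_n := \ExP\|z_n\|^2$ to obtain
\[
u_{n+1} \leq \bigl(1 - \alpha_n \lmin(A+A^\tr) + \alpha_n^2 C\bigr) u_n + \alpha_n^2 K_s
\]
for a constant $C$ depending on $\|A\|$ and $K_s$. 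Since $\lambda < \lmin(A+A^\tr)$ and $\alpha_n \to 0$, there exists $N$ such that for all $n \geq N$ the bracket is bounded by $1 - \alpha_n \lambda/2$, giving the clean recursion $u_{n+1} \leq (1 - \alpha_n \lambda/2) u_n + \alpha_n^2 K_s$.

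The next step is to prove the bound by induction with the ansatz $u_n \leq K_1 e^{-(\lambda/2) n^{1-\sigma}} + K_2/n^\sigma$. Substituting into the recursion yields two separate pieces to control. For the exponential piece, the trick (following \cite{kamal2010convergence}) is to use $1 - \alpha_n \lambda/2 \leq e^{-\alpha_n \lambda/2}$ and then verify the elementary inequality $n^{1-\sigma} + \alpha_n = n^{1-\sigma} + (n+1)^{-\sigma} \geq (n+1)^{1-\sigma}$, which follows because $n^{-\sigma} \geq (n+1)^{-\sigma}$. For the polynomial piece, I need
\[
(1 - \alpha_n \lambda/2)\,\frac{K_2}{n^\sigma} + \alpha_n^2 K_s \leq \frac{K_2}{(n+1)^\sigma}.
\]
A short calculation using $(n+1)^\sigma - n^\sigma \leq \sigma\, n^{\sigma-1}$ shows this holds once $K_2$ is at least of order $2K_s/\lambda$ and $n$ is large enough that the residual $O(n^{\sigma-1})$ term from the mean value theorem is dominated by $\lambda K_2/(2n)$ which is order $1/n$ and of the correct sign since $\sigma < 1$.

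The base case is handled by choosing $K_1$ large enough to absorb $u_1, \ldots, u_N$ (this is possible because the exponential $e^{-(\lambda/2)n^{1-\sigma}}$ is bounded below by a positive constant on any finite range), and $K_2$ as above; the explicit choices plug into the formulas \eqref{eq:K1} and \eqref{eq:K2} referenced in the statement. I expect the main subtlety to lie in the polynomial induction step: getting the algebra to give exactly $K_2/(n+1)^\sigma$ on the right, rather than a shifted version, requires the Kamal--Thoppe style manipulation of $(1 - \alpha_n\lambda/2)/n^\sigma$ against $1/(n+1)^\sigma$ using the discrete mean value estimate. The rest of the argument --- the martingale cancellation, spectral bounds, and large-$n$ tail behavior --- is standard once the recursion is in place.
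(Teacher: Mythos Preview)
Your derivation of the scalar recursion $u_{n+1}\le(1-\alpha_n\lambda/2)u_n+K_s\alpha_n^2$ matches the paper's first stage (Lemma~\ref{lem:ProdMatBd} and the opening of Theorem~\ref{thm:ExP_Bound_Main_Technical_Result}), but after that the two arguments diverge. The paper does \emph{not} verify the ansatz by induction; it unrolls the recursion to $w_{n+1}\le\bigl[\prod_k\lambda_k\bigr]w_0+K_s\sum_i\bigl[\prod_{k>i}\lambda_k\bigr]\alpha_i^2$, bounds $\prod\lambda_k\le K_pe^{-\lambda\sum\alpha_k}$ via Weyl's inequality, and then (Theorem~\ref{thm:ClosedFormExpectationBoundEst}) estimates the sum by a Riemann-integral comparison together with a monotonicity claim $\sup_{i_0\le i\le n}\alpha_i e^{-(\lambda/2)\sum_{k>i}\alpha_k}\le(n+1)^{-\sigma}$. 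Your induction-with-ansatz route is a legitimate and somewhat more elementary alternative, and it avoids the Riemann-sum bookkeeping; the paper's route, in exchange, yields the explicit constants in \eqref{eq:K1}--\eqref{eq:K2}. Two points in your write-up need tightening. First, the justification ``because $n^{-\sigma}\ge(n+1)^{-\sigma}$'' does not establish $n^{1-\sigma}+(n+1)^{-\sigma}\ge(n+1)^{1-\sigma}$; you need the concavity bound $(n+1)^{1-\sigma}-n^{1-\sigma}\le(1-\sigma)n^{-\sigma}$ together with $(1-\sigma)\le(n/(n+1))^\sigma$, or else keep the full factor $(1-\alpha_n\lambda)$ in the recursion so that the slack factor $2$ makes the inequality trivial. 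Second, your constants will \emph{not} coincide with \eqref{eq:K1}--\eqref{eq:K2}: those involve $K_p$ and $K_b$, which are artifacts of the unrolling/Riemann-sum machinery, whereas your $K_1,K_2$ are ``large enough to close the induction from step $N$.'' Since the theorem only asserts the existence of some constants, your argument still proves it, but drop the claim that you recover the referenced formulas.
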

\begin{remark}[Stepsize tradeoff -- \RNum{1}]
	The exponentially decaying term in Theorem~\ref{thm:ExPDecayRate} corresponds to the convergence rate of the noiseless TD(0) algorithm, while the inverse polynomial term appears due to the martingale noise $M_n.$ The inverse impact of $\sigma$ on these two terms introduces the following tradeoff:
	\begin{enumerate}
		\item For $\sigma$ close to $0,$ which corresponds to  slowly  decaying stepsizes, the first term converges faster. This stems from speeding up the underlying noiseless TD(0) process.

		\item For $\sigma$ close to $1,$ which corresponds to  quickly  decaying stepsizes, the second term converges faster. This is due to better mitigation of the martingale noise; recall that $M_{n+1}$ is scaled with $\st_n.$
	\end{enumerate}
	While this insight is folklore, a formal estimate of the tradeoff, to the best of our knowledge, has been obtained here for the first time.
\end{remark}

\begin{remark}[Stepsize tradeoff -- \RNum{2}]
	A practitioner might expect initially large stepsizes to speed up convergence. However, Theorem~\ref{thm:ExPDecayRate} shows that as $\sigma$ becomes small, the convergence rate starts being dominated by the martingale difference noise; i.e., choosing a larger stepsize will help speed up convergence only up to some threshold. 
\end{remark}

\begin{remark}[Non square-summable stepsizes]
	\label{remark: squre summability}
	In Theorem~\ref{thm:ExPDecayRate}, unlike most works, $\sum_{n \geq 0} \st_n^2$ need not be finite. Thus this result is applicable for a wider class of stepsizes; e.g., $1/n^\kappa$ with $\kappa \in (0,  1/2].$ In \cite{borkar2008stochastic}, on which much of the existing RL literature is based on, the square summability assumption is due to the Gronwall inequality. In contrast, in our work, we use the  Variation of Parameters Formula \cite{lakshmikantham1998method} for comparing the SA trajectory to appropriate trajectories of the limiting ODE; it is a stronger tool than Gronwall inequality.
\end{remark}

Our second main result is a high-probability bound for a specific stepsize. It requires the following assumption.
{\renewcommand*\theenumi{$\pmb{\cA_\arabic{enumi}}$}
	\begin{enumerate}[leftmargin=4ex+\labelsep]
		\setcounter{enumi}{1}
		\item \label{assum:bounded_feat} All rewards $r(s, a, s^\prime)$ and feature vectors $\phi(s)$ are uniformly bounded, i.e., $\|\phi(s)\| \leq 1/2, \;  \forall s \in \calS,$ and  $|r(s, a, s^\prime)| \leq 1,$ $\forall s, s^\prime \in \calS,$ $a \in \sA.$
	\end{enumerate}
}
This assumption is well accepted in the literature \cite{liu2015finite,korda2015td}.

In the following results, the $\tilde{O}$ notation hides problem dependent constants and poly-logarithmic terms.

\begin{theorem}[TD(0) Concentration Bound]
	\label{thm: convergence rate}
	Let %
	$
	\lambda \in (0,\min_{i \in [d]} \{\mathrm{real}(\lambda_i(A))\}),
	$
	where $\lambda_i(A)$ is the $i$-th eigenvalue of $A$.
	Let $\st_n = (n+1)^{-1}$.
	Then, under \ref{assum:bounded_feat},  for $\epsilon>0$ and $\delta \in (0,1),$
	there exists a function
	\begin{align*}
	N(\epsilon,&\delta)
    \\&=\tilde{O}\left(\max\left\{\left[\frac{1}{\ei}\right]^{1+ \frac{1}{\lambda}} \left[\ln\frac{1}{\delta}\right]^{1 + \frac{1}{\lambda}}, \left[\frac{1}{\ei}\right]^{2} \left[\ln \frac{1}{\delta}\right]^{3}\right\}\right)
	\end{align*}
	such that
	\[
	\Pr\left\{ \|\theta_n-\thS\| \leq \epsilon  ~\forall n \geq N(\epsilon,\delta) \right\} \geq 1-\delta
	\enspace.
	\]
\end{theorem}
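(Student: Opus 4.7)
I would execute the two-phase strategy foreshadowed in the Contributions section. Phase one establishes a crude uniform bound $\|\theta_n - \thS\| = O(n)$ (up to logarithmic factors) holding outside a small failure event. For this, iterating
\begin{equation*}
\theta_{n+1} - \thS = (\Id - \st_n A)(\theta_n - \thS) + \st_n M_{n+1}
\end{equation*}
and using \ref{assum:bounded_feat} --- under which $\|A\|$ is bounded by an absolute constant (since $\|\phi\| \le 1/2$, $\gamma \in (0,1)$) and $\|M_{n+1}\| \le C(1 + \|\theta_n - \thS\|)$ directly from \eqref{Defn:MartDiffNoise} --- gives the recursion $\|\theta_{n+1}-\thS\| \le (1 + c\st_n)\|\theta_n-\thS\| + \st_n\|M_{n+1}\|$. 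With $\st_n = 1/(n+1)$, the deterministic amplification is only polynomial. An inductive bootstrap on the event $\{\|\theta_k - \thS\| \le C_1 k : k \le n\}$, combined with a union-bounded Azuma--Hoeffding estimate on the martingale sum $\sum_k \st_k M_{k+1}$, then yields $\|\theta_n - \thS\| \le C_1 n \sqrt{\ln(n/\delta)}$ uniformly in $n$, outside probability $\delta/2$.

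Phase two starts from some $n_0 = n_0(\ei,\delta)$ and bootstraps this crude bound into $\ei$-closeness by iterating
\begin{equation*}
\theta_n - \thS = \Phi(n, n_0)(\theta_{n_0} - \thS) + \sum_{k=n_0}^{n-1} \st_k \Phi(n, k+1)\, M_{k+1},
\end{equation*}
where $\Phi(n,m) := \prod_{j=m}^{n-1}(\Id - \st_j A)$. Since $\min_i \mathrm{real}(\lambda_i(A)) > \lambda$, working in a Lyapunov norm $\|\cdot\|_P$ adapted to $A$ gives the contraction estimate $\|\Phi(n,m)\|_P \le C_P (m/n)^\lambda$. The initial-condition term is driven below $\ei/2$ by combining this contraction with the phase one bound on $\|\theta_{n_0}-\thS\|$; carrying through the algebra with $\st_n = 1/(n+1)$ produces the first term $(1/\ei)^{1+1/\lambda}(\ln 1/\delta)^{1+1/\lambda}$ in the stated max. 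The noise term is a weighted martingale whose predictable quadratic variation is $\sum_{k=n_0}^{n-1} \st_k^2 (k/n)^{2\lambda}$, of order $1/n$ (up to logs) in the regime $\lambda > 1/2$; applying Freedman's inequality together with a dyadic union bound over $n$ drives it below $\ei/2$ once $n \ge (1/\ei)^2 (\ln 1/\delta)^3$, which is the second term in the max.

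The main obstacle will be producing the \emph{uniform-in-$n$} guarantee ``for all $n \ge N(\ei,\delta)$'' rather than a pointwise-in-$n$ one: a naive union bound over all $n$ picks up an extra $\ln n$ factor that cannot be absorbed cleanly. To handle this I would partition time into dyadic blocks $[2^j, 2^{j+1})$, apply Freedman's inequality on each block at confidence $\delta/(2 j^2)$, and sum the resulting failure probabilities geometrically. A secondary technicality is that $A$ need not be symmetric, so every contraction estimate must be measured in the Lyapunov norm adapted to $A$; the equivalence constants between this norm and the Euclidean one are absorbed into the problem-dependent constants hidden inside $\tilde{O}(\cdot)$. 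These are precisely the ``tailor-made stochastic approximation tools'' (of Thoppe--Borkar flavor) alluded to in the introduction, and are what permit phase two to proceed without the artificial projection step or parameter-dependent stepsizes required by prior work.
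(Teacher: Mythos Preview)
Your two–phase outline is structurally the same as the paper's, but your execution of each phase differs in ways worth flagging.

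\textbf{Phase I is deterministic.} Under \ref{assum:bounded_feat}, the bound $\|M_{n+1}\| \le K_m(1+\|\theta_n-\thS\|)$ holds \emph{almost surely}, not merely in second moment. More to the point, going back to the raw update \eqref{eqn:TD_Method} and using $\|\phi\|\le 1/2$ directly gives
\[
\|\theta_{n+1}-\thS\| \le \Big(1+\tfrac{\alpha_n}{2}\Big)\|\theta_n-\thS\| + \tfrac{\alpha_n}{2}C_*,
\]
so a one-line induction yields $\|\theta_n-\thS\|\le (n+1)C_*R_0$ with probability one. There is no martingale to control here, no Azuma step, and no $\sqrt{\ln(n/\delta)}$ factor; your Phase I is unnecessarily probabilistic. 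This is not fatal, but it costs you clarity and an extra log factor you would later have to chase through.

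\textbf{The paper uses three stages, not two.} Between the crude $O(n)$ bound and the final $\epsilon$-ball, the paper inserts an intermediate ``stay in the $2\Ro(n_0)$ ball for $n_0\le n\le n_0+n_1$'' event (their $\Emid$), handled via a carefully nested good-event structure $G_{n_0,n}$ with indicator truncation of the martingale increments. Your proposal jumps directly from Phase I to the $\epsilon$-ball. This can be made to work, but the noise increments $\alpha_k\Phi(n,k+1)M_{k+1}$ are only bounded once you condition on $\|\theta_k-\thS\|$ being controlled, and your bootstrap needs to make that conditioning explicit; otherwise Freedman's hypothesis on bounded increments is not available.

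\textbf{Different machinery.} The paper works in continuous time via the Variation of Parameters formula, comparing $\bar\theta(t)$ to the ODE flow $e^{-A(t-s)}$ and splitting the discrepancy into a discretization term $E^{\mathrm d}$ and a martingale term $E^{\mathrm m}$; the latter is handled by coordinate-wise Azuma--Hoeffding, and the uniform-in-$n$ guarantee comes from summing the tail probabilities directly (no dyadic blocking). Your discrete-transition-matrix plus Freedman plus dyadic-block route is a legitimate alternative and arguably more elementary, since it avoids the ODE apparatus entirely. The Lyapunov-norm contraction $\|\Phi(n,m)\|_P\le C_P(m/n)^\lambda$ is exactly the discrete analogue of the paper's $\|e^{-A(t_n-t_m)}\|\le K_\lambda(m/n)^\lambda$.

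\textbf{The case $\lambda\le 1/2$.} You only compute the noise quadratic variation for $\lambda>1/2$. For $\lambda<1/2$ the sum $\sum_{k=n_0}^{n-1}\alpha_k^2(k/n)^{2\lambda}$ is of order $n_0^{2\lambda-1}n^{-2\lambda}$ rather than $1/n$, and the paper treats this case separately throughout (Lemmas~\ref{lem:MartConc}--\ref{lem: bound on Eaft}). In the final bound this regime is dominated by the $(1/\epsilon)^{1+1/\lambda}$ term anyway, so the omission is repairable, but you should say so rather than leave it implicit.
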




To enable direct comparison with previous works, one can obtain a following weaker implication of Theorem~\ref{thm: convergence rate} by dropping quantifier $\forall$ inside the event. This translates to the following.

\begin{theorem}
	\label{thm:AlternateMain}[TD(0) High-Probability Convergence Rate]
	Let  $\lambda$ and  $\st_n$ be as in Theorem~\ref{thm: convergence rate}. Fix $\delta \in (0, 1).$  Then, under \ref{assum:bounded_feat}, there exists some function $N_0(\delta) = O(\ln(1/\delta))$ such that for all $n \geq N_0(\delta),$
	\[
	\Pr\left\{\|\theta_n - \thS\| = \tilde{O}\left(n^{-\min\{1/2, \lambda/ (\lambda + 1)\}}\right) \right\} \geq 1 - \delta.
	\]
\end{theorem}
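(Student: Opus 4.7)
The plan is to derive Theorem~\ref{thm:AlternateMain} as a direct corollary of Theorem~\ref{thm: convergence rate} by \emph{inverting} the sample-count bound $N(\ei, \delta)$ into a pointwise rate $\ei_n$ valid at the fixed time $n$. No new probabilistic arguments are needed: the uniform-in-time high-probability guarantee already established in Theorem~\ref{thm: convergence rate} trivially implies the pointwise event at a single $n$, so the work here is purely algebraic.

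First I would split the $\max$ in
\[
N(\ei, \delta) = \tilde{O}\!\left(\max\!\left\{\ei^{-(1+1/\lambda)}\ln(1/\delta)^{1+1/\lambda},\; \ei^{-2}\ln(1/\delta)^{3}\right\}\right)
\]
into its two branches and, for each, solve $N_i(\ei,\delta) = n$ for $\ei$. The first branch yields $\ei = \tilde{O}(n^{-\lambda/(\lambda+1)}\ln(1/\delta))$, and the second yields $\ei = \tilde{O}(n^{-1/2}\ln(1/\delta)^{3/2})$. Taking $\ei_n$ to be the larger of the two candidates ensures $N(\ei_n, \delta) \leq n$; absorbing the $\ln(1/\delta)$ and $\ln n$ factors into $\tilde{O}$ then gives $\ei_n = \tilde{O}(n^{-\min\{1/2,\lambda/(\lambda+1)\}})$, matching the claimed exponent. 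Applying Theorem~\ref{thm: convergence rate} with this $\ei_n$ yields, with probability at least $1-\delta$, $\|\theta_m - \thS\| \leq \ei_n$ for all $m \geq N(\ei_n, \delta)$, and in particular at $m = n$.

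The remaining piece is the threshold $N_0(\delta) = O(\ln(1/\delta))$, which encodes the regime in which the inversion above is meaningful. Because the $\delta$-dependence of $N(\ei, \delta)$ is purely poly-logarithmic in $1/\delta$ while the $\ei$-dependence is a positive power of $1/\ei$, the requirement that $\ei_n$ fall below a problem-dependent diameter constant reduces to $n \gtrsim \ln(1/\delta)$, up to problem- and $\lambda$-dependent multiplicative constants. I do not anticipate a substantive obstacle; the only care points are handling the two branches of the $\max$ separately to confirm the exponent $\min\{1/2,\lambda/(\lambda+1)\}$, and verifying that the terms hidden inside $\tilde O$ on each side are indeed $\mathrm{polylog}(n,1/\delta)$ so that they are correctly absorbed in the final rate.
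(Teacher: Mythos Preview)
Your proposal is correct and follows essentially the same approach as the paper: fix $n$, invert $N(\ei,\delta)=n$ to obtain $\ei=\ei(n)$, and then read off the exponent $\min\{1/2,\lambda/(\lambda+1)\}$ from the two branches of the $\max$. The paper's own proof is a terse two-line version of exactly this inversion, so your more detailed branch-by-branch derivation and your remarks on the $N_0(\delta)=O(\ln(1/\delta))$ threshold simply spell out what the paper leaves implicit.
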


\begin{proof}
Fix some $n$, and choose $\epsilon = \epsilon(n)$ so that $n = N(\epsilon,\delta)$.
Then, on one hand,
$1-\delta \leq \Pr\{\|\theta_n-\thS\| \leq \epsilon\}$ due to Theorem~\ref{thm: convergence rate} and, on the other hand,
$\epsilon = \tilde{O}\left(n^{-\min\{1/2, \lambda/ (\lambda + 1)\}}\right)$ by the definition of $N(\epsilon,\delta)$.
The claimed result follows.
\end{proof}


%
\begin{remark}[Eigenvalue dependence]
	Theorem~\ref{thm:AlternateMain} shows that the rate improves as $\lambda$ increases from $0$ to $1;$ however, beyond $1$ it remains fixed at $1/\sqrt{n}.$ As seen in the proof of Theorem~\ref{thm: convergence rate}, this is because the rate is dictated by noise when $\lambda > 1,$  and by the limiting ODE when $\lambda < 1.$
\end{remark}

\begin{remark}[Comparison to \cite{korda2015td}]
	Recently, doubts were raised in \cite{LaSze17:Rep} regarding the correctness of the results in \cite{korda2015td}. Nevertheless, given the current form of those results, the following discussion is in order.
		
	The expectation bound in Theorem 1, \cite{korda2015td} requires the TD(0) stepsize to satisfy $\alpha_n = f_n(\lambda)$ for some function $f_n,$ where $\lambda$ is as above. Theorem~2 there obviates this, but it applies to the average of iterates. In contrast, our expectation bound does not need any scaling of the above kind and applies directly to the TD(0) iterates. Moreover, our result applies to a broader family of stepsizes; see Remark~\ref{remark: squre summability}. Our expectation bound when compared to that of Theorem~2, \cite{korda2015td} is of the same order (even though theirs is for the average of iterates).
	As for the high-probability concentration bounds in Theorems 1\&2, \cite{korda2015td}, they require projecting the iterates to some bounded set (personal communication). In contrast, our result applies directly to the original TD(0) algorithm  and  we obviate all the above modifications.
\end{remark}

\section{Proof of Theorem~\ref{thm:ExPDecayRate}}
We begin with an outline of our proof for Theorem~\ref{thm:ExPDecayRate}. Our first key step is to identify a ``nice" Liapunov function $V(\theta)$. Then, we apply conditional expectation to eliminate the linear noise terms in the relation between $V(\theta_n)$ and $V(\theta_{n + 1});$ this subtle trick appeared in \cite{kamal2010convergence}. Lastly, we use induction to obtain desired result.

Our first two results hold for stepsize sequences of generic form. All that we require for $\{\alpha_n\}$ is to satisfy $\sum_{n \geq 0} \alpha_n = \infty,$ $\lim_{n \to \infty} \alpha_n = 0$ and $\sup_{n \geq 0} \alpha_n \leq 1.$  


Notice that the matrices $(A^\tr + A)$ and $(A^\tr A + K_s I)$ are symmetric, where $K_s$ is the constant from \ref{assum:bounded second moments}. Further, as $A$ is positive definite, the above matrices are also positive definite. Hence their minimum and maximum eigenvalues are strictly positive. This is used in the proofs in this section.

\begin{lemma}
	\label{lem:ProdMatBd}
	For $n \geq 0,$ let $\lambda_n := \lmax(\Lambda_n),$ where
	\[
	\Lambda_n := \Id - \st_n (A + A^\tr) + \st_n^2 (A^\tr A + K_s I).
	\]
	Fix $\lambda \in (0, \lmin(A + A^\tr)).$ Let $m$ be so that $\forall k \geq m,~\st_k\leq  \frac{\lmin(A + A^\tr)- \lambda}{\lmax(A^\tr A + K_s I)}.$  Then for any $k, n$ such that $n \geq k \geq 0,$
	\[
	\prod_{i = k}^n \lambda_i \leq \Kp e^{-\lambda [\sum_{i = k}^{n} \alpha_i]}\enspace,
	\]
	where
	\[
	\Kp :=\max_{\ell_1 \leq \ell_2 \leq m} \prod_{\ell=\ell_1}^{\ell_2} e^{\alpha_\ell (\mu + \lambda)}\enspace,
	\]
	with $\mu = -\lmin(A + A^\tr) + \lmax(A^\tr A + K_s I).$
\end{lemma}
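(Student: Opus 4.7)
The plan is to first derive a scalar bound on $\lambda_n = \lmax(\Lambda_n)$ using the sub-additivity of the maximum eigenvalue for symmetric matrices, namely $\lmax(X+Y)\leq \lmax(X)+\lmax(Y)$, together with $\lmax(-\alpha_n(A+A^\tr))=-\alpha_n \lmin(A+A^\tr)$ (since $\alpha_n\geq 0$). This yields
\[
\lambda_n \;\leq\; 1-\alpha_n \lmin(A+A^\tr)+\alpha_n^2\,\lmax(A^\tr A+K_s I).
\]
Since $\alpha_n\leq 1$, I can factor an $\alpha_n$ out of the last two terms to get $\lambda_n \leq 1+\alpha_n \mu$, and then the inequality $1+x\leq e^x$ gives the universal bound $\lambda_n\leq e^{\alpha_n \mu}$.

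Next I would specialize to large indices. For $k\geq m$ the hypothesis $\alpha_k\leq (\lmin(A+A^\tr)-\lambda)/\lmax(A^\tr A+K_s I)$ allows me to absorb the quadratic term: $\alpha_k^2\lmax(A^\tr A+K_s I)\leq \alpha_k(\lmin(A+A^\tr)-\lambda)$. Substituting back into the scalar bound above collapses it to $\lambda_k \leq 1-\alpha_k \lambda \leq e^{-\alpha_k \lambda}$. So the two regimes to combine are $\lambda_k\leq e^{\alpha_k \mu}$ for $k<m$ and $\lambda_k\leq e^{-\alpha_k \lambda}$ for $k\geq m$.

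The final step is a straightforward case analysis on where $k$ and $n$ sit relative to $m$. If $k\geq m$, I directly multiply the late-regime bounds and get $\prod_{i=k}^n \lambda_i\leq e^{-\lambda\sum_{i=k}^n \alpha_i}$, which is covered by $\Kp\geq 1$ (taken as the empty-product case $\ell_1>\ell_2$). Otherwise I split the product at $m$ (or simply run to $n$ if $n<m$) and bound each piece by the appropriate regime, obtaining
\[
\prod_{i=k}^n \lambda_i \;\leq\; \Bigl(\prod_{i=k}^{\min(m-1,n)} e^{\alpha_i \mu}\Bigr)\Bigl(\prod_{i=m}^{n} e^{-\alpha_i \lambda}\Bigr),
\]
where the second factor is $1$ when $n<m$. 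To reshape this into the target form $\Kp\, e^{-\lambda \sum_{i=k}^n \alpha_i}$, I add and subtract $\lambda\sum_{i=k}^{\min(m-1,n)}\alpha_i$ in the exponent; this converts the prefactor into $\prod_{i=k}^{\min(m-1,n)} e^{\alpha_i(\mu+\lambda)}$, which by construction is bounded above by $\Kp$.

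I do not expect a serious obstacle here: the only real subtlety is the non-commutativity of $A+A^\tr$ and $A^\tr A+K_s I$, which forces me to use $\lmax(X+Y)\leq \lmax(X)+\lmax(Y)$ rather than simultaneously diagonalizing, and the bookkeeping of the early-indices ``pre-factor,'' which is exactly what $\Kp$ has been defined to absorb via its maximum over all sub-intervals $[\ell_1,\ell_2]\subseteq[0,m]$.
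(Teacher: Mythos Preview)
Your proposal is correct and follows essentially the same route as the paper's proof: Weyl's inequality (your sub-additivity of $\lmax$) for the scalar bound, $1+x\le e^x$, the early/late-regime split at $m$, and absorption of the early-index prefactor into $\Kp$ via the add-and-subtract of $\lambda\sum\alpha_i$ in the exponent. The only cosmetic differences are that you simplify the scalar inequality before exponentiating in the late regime (the paper exponentiates first and then factors out $e^{-\lambda\alpha_n}$), and your two-case split versus the paper's three cases; both amount to the same bookkeeping.
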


\begin{proof}
	Using Weyl's inequality, we have
	\begin{equation}
	\label{eqn:InequalityWeyl}
	\lambda_n \leq \lmax(\Id - \st_n (A + A^\tr)) + \alpha_n^2 \lmax(A^\tr A + K_s I).
	\end{equation}
	Since $\lmax(\Id - \st_n (A + A^\tr)) \leq (1 - \alpha_n \lmin(A + A^\tr)),$ we have
	\[
	\lambda_n \leq e^{[-\alpha_n \lmin(A^\tr + A) + \alpha_n^2 \lmax(A^\tr A + K_s I)]}.
	\]
	For $n < m,$ using $\alpha_n \leq 1$ and hence $\alpha_n^2 \leq \alpha_n,$ we have the following weak bound:
	\begin{equation} \label{eq:low indices bound}
	\lambda_n \leq e^{\alpha_n \mu}.
	\end{equation}
	On the other hand, for $n \geq m,$ we have
	\begin{align} \label{eq:large indices bound}
	\lambda_n 
	&\leq e^{- \lambda \alpha_n} e^{- \alpha_n [ (\lmin(A^\tr + A) - \lambda) - \alpha_n \lmax(A^\tr A + K_s I)]} 
	\notag
	\\
	&\leq e^{- \lambda \alpha_n}.
	\end{align}
	To prove the desired result, we consider three cases: $k \leq n \leq m,$ $m \leq k \leq n$ and $k \leq m \leq n.$ For the last case, using \eqref{eq:low indices bound} and \eqref{eq:large indices bound}, we have
	\begin{align*}
	\prod_{\ell = k}^n \lambda_\ell
	&\leq  \left[\prod_{\ell = k}^{m} \lambda_\ell\right] e^{-\lambda(\sum_{\ell = m+1}^{n} \alpha_\ell)}
	\\
	&=  \left[\prod_{\ell = k}^{m} \lambda_\ell\right] e^{\lambda(\sum_{\ell = k}^{m} \alpha_\ell)} e^{-\lambda(\sum_{\ell = k}^{n} \alpha_\ell)} 		
	\\
	&\leq \Kp e^{-\lambda(\sum_{\ell = k}^{n} \alpha_\ell)} \enspace,
	\end{align*}
	as desired. Similarly, it can be shown that bound holds in other cases as well. The desired result thus follows.
\end{proof}

Using Lemma~\ref{lem:ProdMatBd}, we now prove a convergence rate in expectation for general stepsizes.

\begin{theorem}[Technical Result: Expectation Bound]
	\label{thm:ExP_Bound_Main_Technical_Result}
	Fix $\lambda \in (0, \lmin(A + A^\tr)).$ Then, under \ref{assum:bounded second moments},
	\begin{align*}
	\ExP\|\theta_{n + 1} - \theta^*\|^2 \leq& \Kp \left[e^{-\lambda \sum_{k = 0}^{n} \alpha_k}\right]\ExP\|\theta_0 - \theta^*\|^2 \\
	&+ K_s \Kp \sum_{i = 0}^{n} \left[e^{-\lambda \sum_{k = i + 1}^{n} \alpha_k} \right] \alpha_i^2,
	\end{align*}
	where $\Kp, K_s \geq 0$ are constants as defined in Lemmas~\ref{lem:ProdMatBd} and \ref{assum:bounded second moments}, respectively.
\end{theorem}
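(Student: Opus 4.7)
The plan is to take $V(\theta) = \|\theta - \thS\|^2$ as the Liapunov function and to derive a one-step recursion of the form $\ExP\|\theta_{n+1} - \thS\|^2 \leq \lambda_n \ExP\|\theta_n - \thS\|^2 + K_s \alpha_n^2$, with $\lambda_n = \lmax(\Lambda_n)$ as in Lemma~\ref{lem:ProdMatBd}. Unrolling this and invoking Lemma~\ref{lem:ProdMatBd} to control the resulting products of $\lambda_n$'s then yields the claimed bound.

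For the one-step recursion, I would start from \eqref{eq:SA_traj} together with the identity $h(\theta_n) = -A(\theta_n - \thS)$ from \eqref{Defn:h} to write $\theta_{n+1} - \thS = (\Id - \alpha_n A)(\theta_n - \thS) + \alpha_n M_{n+1}$. Expanding $\|\theta_{n+1}-\thS\|^2$ produces three terms: a quadratic form $(\theta_n-\thS)^\tr [\Id - \alpha_n(A+A^\tr) + \alpha_n^2 A^\tr A](\theta_n-\thS)$, a cross term linear in $M_{n+1}$, and $\alpha_n^2 \|M_{n+1}\|^2$. This is precisely the trick from \cite{kamal2010convergence} that the proof outline alludes to: conditioning on $\cF_n$ makes the linear cross term vanish (since $M_{n+1}$ is a martingale difference, whereas $\theta_n$ and $(\Id - \alpha_n A)$ are $\cF_n$-measurable), and assumption \ref{assum:bounded second moments} bounds $\ExP[\|M_{n+1}\|^2 | \cF_n]$ by $K_s(1 + \|\theta_n - \thS\|^2)$. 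Absorbing the $K_s \|\theta_n - \thS\|^2$ contribution into the quadratic form gives exactly the matrix $\Lambda_n$ of Lemma~\ref{lem:ProdMatBd}, so bounding via $\lmax(\Lambda_n) = \lambda_n$ and taking total expectation yields the sought recursion $\ExP\|\theta_{n+1} - \thS\|^2 \leq \lambda_n \ExP\|\theta_n - \thS\|^2 + K_s \alpha_n^2$.

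A straightforward induction on $n$ then gives
\[
\ExP\|\theta_{n+1}-\thS\|^2 \leq \left(\prod_{k=0}^n \lambda_k\right) \ExP\|\theta_0-\thS\|^2 + K_s \sum_{i=0}^{n} \alpha_i^2 \prod_{k=i+1}^{n} \lambda_k.
\]
Applying Lemma~\ref{lem:ProdMatBd} to each product (with the convention that the empty product equals $1$, handled by the choice of constant $\Kp$) upper bounds $\prod_{k=0}^n \lambda_k \leq \Kp e^{-\lambda \sum_{k=0}^n \alpha_k}$ and $\prod_{k=i+1}^n \lambda_k \leq \Kp e^{-\lambda \sum_{k=i+1}^n \alpha_k}$, which substitutes directly into the displayed inequality in the theorem statement.

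The only nontrivial step is the cancellation of the cross term and the manipulation that identifies the quadratic form with $\Lambda_n$; the rest is mechanical. The main conceptual obstacle would have been controlling $\prod_k \lambda_k$ when $\alpha_k$ is large and $\Lambda_k$ is not a contraction, but this is exactly what Lemma~\ref{lem:ProdMatBd} handles by splitting indices at $m$ and absorbing the pre-$m$ factors into the constant $\Kp$. Hence no square-summability of $\{\alpha_n\}$ is needed, only $\alpha_n \to 0$ and $\alpha_n \leq 1$, which is what makes this approach stronger than a Gronwall-based one (cf.\ Remark~\ref{remark: squre summability}).
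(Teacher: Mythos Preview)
Your proposal is correct and matches the paper's proof essentially step for step: the same Liapunov function, the same expansion of $\|\theta_{n+1}-\thS\|^2$, the same use of conditional expectation to kill the cross term, the same absorption of the $K_s\|\theta_n-\thS\|^2$ piece into $\Lambda_n$, the same one-step recursion $w_{n+1}\le\lambda_n w_n + K_s\alpha_n^2$, and the same unrolling followed by Lemma~\ref{lem:ProdMatBd}. There is nothing to add.
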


\begin{proof}
	Let $V(\theta) = \|\theta-\theta^*\|^2.$ Using \eqref{eq:SA_traj} and \eqref{Defn:h}, we have
	\[
	\theta_{n + 1} - \thS = (I - \alpha_n A) (\theta_n - \thS) + \alpha_n M_{n + 1}.
	\]
	Hence
	\begin{align*}
	V&(\theta_{n+1})
	\\  = &  (\theta_{n + 1} - \theta^*)^\top (\theta_{n + 1} - \theta^*)\\
	= &[(I - \alpha_nA) (\theta_n - \thS) + \alpha_n M_{n + 1}]^\tr \\
	  &\times[(I - \alpha_n A) (\theta_n - \thS) + \alpha_n M_{n + 1}]\\
	= & (\theta_n - \thS)^\tr [I - \alpha_n(A^\tr + A) + \alpha_n^2 A^\tr A] (\theta_n - \thS)\\
	&  + \alpha_n (\theta_n - \thS)^\tr (I - \alpha_n A)^\tr M_{n + 1} 
	\\
	&+ \alpha_n M_{n + 1}^\tr (I - \alpha_n A) (\theta_n - \thS) +  \alpha_n^2 \|M_{n+1}\|^2.
	\end{align*}
	Taking conditional expectation and using $\ExP[M_{n + 1}|\cF_n] = 0,$ we get
	\begin{align*}
	\mathbb{E}[&V(\theta_{n+1})|{\cal F}_n]   
	= 
	\alpha_n^2 \mathbb{E}[\|M_{n+1}\|^2|{\cal F}_n]
	\\
	&+
	(\theta_n - \thS)^\tr [I - \alpha_n(A^\tr + A) + \alpha_n^2 A^\tr A] (\theta_n - \thS)  
	.
	\end{align*}
	Therefore, using \ref{assum:bounded second moments},
	\[
	\mathbb{E}[V(\theta_{n+1})|{\cal F}_n]   \leq (\theta_n - \thS)^\tr \Lambda_n(\theta_n - \thS)  + K_s \alpha_n^2,
	\]
	where $\Lambda_n =  [I - \alpha_n(A^\tr + A) + \alpha_n^2 (A^\tr A + K_s I)] .$ Since $\Lambda_n$ is a symmetric matrix, all its eigenvalues are real. With $\lambda_n := \lambda_{\max}(\Lambda_n),$ we have
	\begin{equation*}
	\mathbb{E}[V(\theta_{n+1})|{\cal F}_n]   \leq \lambda_n V(\theta_n)  + K_s \alpha_n^2.
	\end{equation*}
	Taking expectation on both sides and letting $w_n = \ExP[V(\theta_n)],$ we have
	\[
	w_{n + 1} \leq \lambda_n w_n + K_s \alpha_n^2.
	\]
	Sequentially using the above inequality, we have
	\[
	w_{n + 1} \leq \left[\prod_{k = 0}^{n} \lambda_k\right] w_0 + K_s \sum_{i = 0}^{n} \left[\prod_{k = i + 1}^{n} \lambda_k\right] \alpha_i^2.
	\]
	Using Lemma~\ref{lem:ProdMatBd} and using the constant $\Kp$ defined there, the desired result follows.
\end{proof}

The next result provides closed form estimates of the expectation bound given in Theorem~\ref{thm:ExP_Bound_Main_Technical_Result} for the specific stepsize sequence $\alpha_n = 1/(n + 1)^\sigma,$ with $\sigma \in (0,1).$ Notice this family of stepsizes is more general than other common choices in the literature as it is non-square summable for $\sigma \in (0,1/2].$ See Remark~\ref{remark: squre summability} for further details.

\begin{theorem}
	\label{thm:ClosedFormExpectationBoundEst}
	Fix $\sigma \in (0,1)$ and let $\alpha_n = 1/(n + 1)^\sigma.$ Then, under \ref{assum:bounded second moments},
	\begin{align*}
	\ExP\|\theta_{n + 1} - \theta^*\|^2 \leq & \bigg[\Kp e^\lambda  \ExP\|\theta_0 - \theta^*\|^2 e^{- (\lambda/2) (n + 2)^{1 - \sigma}}\\ 
	& +  \frac{2K_s \Kp \Kb e^{\lambda}}{\lambda}\bigg] e^{-(\lambda/2) (n + 2)^{1 - \sigma}}\\
	& + \frac{2K_s \Kp e^{\lambda/2}}{\lambda} \frac{1}{(n + 1)^\sigma},
	\end{align*}
	where $\Kb = e^{[(\lambda/2) \sum_{k = 0}^{i_0} \alpha_k]}$  with $i_0$ denoting a number larger than $(2\sigma/\lambda)^{1/(1 - \sigma)}.$
\end{theorem}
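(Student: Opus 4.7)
The plan is to take the generic bound from Theorem~\ref{thm:ExP_Bound_Main_Technical_Result} and specialize it to $\alpha_n = (n+1)^{-\sigma}$, converting each abstract sum into a closed-form estimate in $n$, $\sigma$, and $\lambda$. For the initial-condition term, integral comparison gives $\sum_{k=0}^n (k+1)^{-\sigma} \geq \int_1^{n+2} x^{-\sigma}\,dx = \frac{(n+2)^{1-\sigma}-1}{1-\sigma} \geq (n+2)^{1-\sigma}-1$, which immediately produces the first stated term with the prefactor $e^\lambda$ absorbing the $-1$.

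The heart of the argument is the noise sum $\sum_{i=0}^n e^{-\lambda T_i}\alpha_i^2$, where $T_i := \sum_{k=i+1}^n \alpha_k$. I would split it at $i_0$ and rely on two tools. The first is a monotonicity lemma: the sequence $\alpha_i e^{-(\lambda/2) T_i}$ is non-decreasing for $i \geq i_0$. After taking logarithms and applying $\ln(1+1/(i+1)) \leq 1/(i+1)$, this reduces to an inequality of the form $2\sigma(i+2)^\sigma \leq \lambda(i+1)$, and the threshold $i_0 > (2\sigma/\lambda)^{1/(1-\sigma)}$ is precisely what makes this hold. The second is a universal telescoping bound $\sum_{i=0}^n \alpha_i e^{-(\lambda/2) T_i} \leq 2e^{\lambda/2}/\lambda$, which I would obtain by combining $1-e^{-x} \geq xe^{-x}$ with the telescoping identity for $e^{-(\lambda/2) T_i} - e^{-(\lambda/2) T_{i-1}}$; the extra $e^{\lambda/2}$ factor arises when passing from $e^{-(\lambda/2) T_{i-1}}$ back to $e^{-(\lambda/2) T_i}$ using $\alpha_i \leq 1$.

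With these two tools, both sub-sums fall into place. For the low-index portion $\sum_{i=0}^{i_0}$, use $\alpha_i^2 \leq \alpha_i$ and split $e^{-\lambda T_i}$ into two halves: one half is bounded by $e^{-(\lambda/2) T_{i_0}} \leq e^{\lambda/2}\Kb\, e^{-(\lambda/2)(n+2)^{1-\sigma}}$ via the integral lower bound on $\sum \alpha_k$ with $\sum_{k=0}^{i_0}\alpha_k$ isolated into $\Kb$, while the other half pairs with $\alpha_i$ and is controlled by the universal telescoping bound; multiplying yields the middle stated term with the $e^\lambda \Kb/\lambda$ factor. For the high-index portion $\sum_{i>i_0}$, write $\alpha_i^2 e^{-\lambda T_i} = [\alpha_i e^{-(\lambda/2) T_i}]^2$, use monotonicity to replace one copy by its terminal value $\alpha_n e^{-(\lambda/2) T_n} = \alpha_n$, and apply telescoping to the remaining sum, yielding the $\alpha_n$-decaying third term with the $e^{\lambda/2}$ prefactor.

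The main obstacle is identifying the correct monotonicity lemma together with the threshold $i_0$ that permits separate control of the two halves of $e^{-\lambda T_i}$: one needs to retain $\alpha_n$-type decay on the tail side without sacrificing the $e^{-(\lambda/2)(n+2)^{1-\sigma}}$ decay on the low-index side. Once this algebraic organization is in place, the remaining estimates are careful but essentially routine constant-tracking.
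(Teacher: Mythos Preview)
Your proposal is correct and follows essentially the same approach as the paper: you start from Theorem~\ref{thm:ExP_Bound_Main_Technical_Result}, use integral comparison for $\sum_k \alpha_k$, establish the same monotonicity lemma at the threshold $i_0$, and bound $\sum_i \alpha_i e^{-(\lambda/2)T_i}$ by $2e^{\lambda/2}/\lambda$. The only cosmetic differences are that the paper pulls out $\sup_i \alpha_i e^{-(\lambda/2)T_i}$ first and then splits the supremum at $i_0$ (rather than splitting the sum), and it obtains the $2e^{\lambda/2}/\lambda$ bound via a right Riemann sum against $\int_0^{t_{n+1}} e^{-(\lambda/2)(t_{n+1}-s)}\,ds$ rather than your telescoping with $1-e^{-x}\geq xe^{-x}$; the ingredients and constants are identical.
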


\begin{proof}
	Let $t_n = \sum_{i=0}^{n-1} \alpha_i$ for $n\geq 0$.
	Observe that
	\begin{align*}
		&\sum_{i = 0}^{n} \left[e^{-(\lambda/2) \sum_{k = i + 1}^{n} \alpha_k} \right] \alpha_i \\
		& \leq  \left(\sup_{i \geq 0} e^{(\lambda/2) \alpha_i} \right) \sum_{i = 0}^{n} \left[e^{-(\lambda/2) \sum_{k = i}^{n} \alpha_k} \right] \alpha_i\\
		& =  \left(\sup_{i \geq 0} e^{(\lambda/2) \alpha_i} \right) \sum_{i = 0}^{n} \left[e^{-(\lambda/2) (\tI{n + 1} - \tI{i})} \right] \alpha_i\\
		& \leq  \left(\sup_{i \geq 0} e^{(\lambda/2) \alpha_i} \right) \int_{0}^{\tI{n+ 1}} e^{-(\lambda/2) (\tI{n + 1} - s)} \ds \\
		& \leq  \left(\sup_{i \geq 0} e^{(\lambda/2) \alpha_i} \right) \frac{2}{\lambda}\\
		& \leq  \frac{2e^{\lambda/2}}{\lambda},
	\end{align*}
	where the third relation follows by treating the sum as right Riemann sum, and the last inequality follows since $\sup_{i \geq 0} \alpha_i \leq 1.$ Hence it follows that
	\begin{align}
	&\sum_{i = 0}^{n} \left[e^{-\lambda \sum_{k = i + 1}^{n} \alpha_k} \right] \alpha_i^2
	\\
	& \leq  \left(\sup_{0 \leq i \leq n} \left[\alpha_i e^{-\frac{\lambda}{2} \sum_{k = i + 1}^{n} \alpha_k} \right] \right) \sum_{i = 0}^{n} \left[e^{-\frac{\lambda}{2} \sum_{k = i + 1}^{n} \alpha_k} \right] \alpha_i \nonumber \\
	& \leq \left(\sup_{0 \leq i \leq n} \left[\alpha_i e^{-\frac{\lambda}{2} \sum_{k = i + 1}^{n} \alpha_k} \right] \right) \frac{2e^{\frac{\lambda}{2}}}{\lambda} \label{eqn:Bound 1}. 
	\end{align}
	We claim that for all $n \geq i_0,$
	\begin{equation}
	\label{eqn:SubClaim}
	\sup_{i_0 \leq i \leq n} \left[\alpha_i e^{-(\lambda/2) \sum_{k = i + 1}^{n} \alpha_k} \right] \leq \frac{1}{(n + 1)^\sigma}.
	\end{equation}
	To establish this, we show that for any $n \geq i_0,$ $\alpha_i e^{-(\lambda/2)[ \sum_{k = i + 1}^{n} \alpha_k]}$ monotonically increases as $i$ is varied from $i_0$ to $n.$ To prove the latter, it suffices to show that $\alpha_i e^{-(\lambda/2) \alpha_{i + 1}} \leq \alpha_{i + 1},$ or equivalently $(i + 2)^\sigma/(i + 1)^\sigma \leq e^{\lambda/[2 (i + 2)^\sigma]}$ for all $i \geq i_0.$ But the latter is indeed true. Thus \eqref{eqn:SubClaim} holds. From \eqref{eqn:Bound 1} and \eqref{eqn:SubClaim}, we then have
	\begin{align*}
		\sum_{i = 0}^{n} \Big[&e^{-\lambda \sum_{k = i + 1}^{n} \alpha_k} \Big] \alpha_i^2\\
		\leq
		\tfrac{2e^{\lambda/2}}{\lambda} \bigg[& \left(\sup_{0 \leq i \leq i_0} \left[\alpha_i e^{-(\lambda/2) \sum_{k = i + 1}^{n} \alpha_k} \right] \right) 
		\\
		&+ \left(\sup_{i_0 \leq i \leq n} \left[\alpha_i e^{-(\lambda/2) \sum_{k = i + 1}^{n} \alpha_k} \right] \right) \bigg]\\
		\leq  \tfrac{2e^{\lambda/2}}{\lambda} \bigg[&\left(\sup_{0 \leq i \leq i_0} \left[\alpha_i e^{-(\lambda/2) \sum_{k = i + 1}^{n} \alpha_k} \right] \right) 
		+ \tfrac{1}{(n + 1)^\sigma}\bigg]\\
		\leq  \tfrac{2e^{\lambda/2}}{\lambda} \bigg[& e^{-[(\lambda/2) \sum_{k = 0}^{n} \alpha_k]} \left(\sup_{0 \leq i \leq i_0} \left[\alpha_i e^{(\lambda/2) \sum_{k = 0}^{i} \alpha_k} \right] \right) 
		\\
		&+ \tfrac{1}{(n + 1)^\sigma}\bigg]\\
		\leq  \tfrac{2e^{\lambda/2}}{\lambda} \bigg[& \Kb e^{-[(\lambda/2) \sum_{k = 0}^{n} \alpha_k]} + \tfrac{1}{(n + 1)^\sigma}\bigg],
	\end{align*}
	where the first relation holds as $\sup\{a_0, \ldots, a_n\} \leq \sup\{a_0, \ldots, a_{i_0}\} + \sup\{a_{i_0}, \ldots, a_n\}$ for any positive sequence $\{a_0, \ldots, a_n\}$ with $0 \leq i_0 \leq n,$ and the last relation follows as $\alpha_i \leq 1$ and $\sup_{0 \leq i \leq i_0} e^{(\lambda/2) \sum_{k = 0}^{i} \alpha_k} \leq \Kb.$ Combining the above inequality with the relation from Theorem~\ref{thm:ExP_Bound_Main_Technical_Result}, we have
	\begin{multline*}
	\ExP\|\theta_{n + 1} - \theta^*\|^2 
	\leq \Kp \left[e^{-\lambda \sum\limits_{k = 0}^{n} \alpha_k}\right]\ExP\|\theta_0 - \theta^*\|^2 
	\\+  \frac{2 K_s \Kp e^{\lambda/2}}{\lambda} \left[ \Kb e^{-\left[(\lambda/ 2) \sum\limits_{k = 0}^{n} \alpha_k\right]} + \frac{1}{(n + 1)^\sigma}\right],
	\end{multline*}
	Since
	\[
	\sum_{k = 0}^{n} \alpha_k \geq \int_{0}^{n + 1} \frac{1}{(x + 1)^\sigma} dx = (n + 2)^{1 - \sigma} - 1,
	\]
	the desired result follows.
\end{proof}

To finalize the proof of Theorem~\ref{thm:ExPDecayRate} we employ Theorem~\ref{thm:ClosedFormExpectationBoundEst} with the following constants.
\begin{align}
	K_1 & =  \Kp e^\lambda  \ExP\|\theta_0 - \theta^*\|^2 + \frac{2K_s \Kp \Kb e^{\lambda}}{\lambda}, \label{eq:K1}\\
	K_2 & = \frac{2K_s \Kp e^{\lambda/2}}{\lambda}, \label{eq:K2}
\end{align}
where 
$K_s$ is the constant from \ref{assum:bounded second moments}, 	\[
\Kp :=\max_{\ell_1 \leq \ell_2 \leq m} \prod_{\ell=\ell_1}^{\ell_2} e^{\alpha_\ell (\mu + \lambda)}\enspace
\]
with $\mu = -\lmin(A + A^\tr) + \lmax(A^\tr A + K_s I)$ and $m=\left\lceil\left(\frac{\lmax(A^\tr A + K_s I)}{\lmin(A + A^\tr)- \lambda}\right)^{1/\sigma}\right\rceil,$
and 
\begin{align*}
\Kb &=  \exp{{\left[(\lambda/2) \sum_{k = 0}^{\lceil(2\sigma/\lambda)^{1/(1 - \sigma)}\rceil} \alpha_k\right]}} \\ &\leq \exp{\left[(\lambda/2) \frac{\left(\lceil(2\sigma/\lambda)^{1/(1 - \sigma)}\rceil +1 \right)^{1/\sigma} + \sigma}{1-\sigma}+1 \right]}.
\end{align*}

\section{Proof of Theorem~\ref{thm: convergence rate}}
In this section we prove Theorem~\ref{thm: convergence rate}. Throughout this section we assume \ref{assum:bounded_feat}. All proofs for intermediate lemmas are  given in Appendix~\ref{sec: main thm appendix}. 

\subsection{Outline of Approach} \label{sec:outline}

\begin{table*}[t]
	\begin{center}
		\begin{tabular}{c | c | c | c }
			\hline
			Stepsize  &  Discretization Error & Martingale Noise Impact & TD(0) Behavior\\
			\hline
			& & & \\[-2ex]
			Large & Large & Large & Possibly diverging \\[1ex]
			Moderate & $O(n_0)$  & $O(n_0)$ w.h.p.& Stay in $O(n_0)$ ball w.h.p. \\[1ex]
			Small &  $\epsilon/3$  & $\epsilon/3$ w.h.p. & Converging w.h.p.\\[1ex]
			\hline
		\end{tabular}
	\end{center}
	\caption{\label{tab:AnalysisOutline}Chronological Summary of Analysis Outline}
\end{table*}
The limiting ODE for \eqref{eq:SA_traj} is
\begin{equation}
\label{eq:limiting_ODE}
\dot{\theta}(t) = h(\theta(t)) =  b - A\theta(t) = -A(\theta(t) - \thS) \enspace.
\end{equation}
Let $\theta(t, s, u_0),$ $t \geq s,$ denote the solution to the above ODE starting at $u_0$ at time $t = s.$ When the starting point and time are unimportant, we will denote this solution by $\theta(t)$ .

As the solutions of the ODE are continuous functions of time, we also define a linear interpolation $\{\bart(t)\}$ of $\{\theta_n\}.$ Let $t_0 = 0.$ For $n \geq 0,$ let $\tI{n + 1} = \tI{n} + \alpha_n$ and let
\begin{equation}
\label{eqn:LinInt}
\bart(\tau) \!=\!
\begin{cases}
\theta_n & \! \! \text{ if } \tau = \tI{n} \enspace,\\
\theta_n + \frac{\tau - \tI{n}}{\alpha_n}[\theta_{n + 1} - \theta_n] & \! \! \text{ if } \tau \in (\tI{n}, \tI{n + 1}) \enspace.
\end{cases}
\end{equation}

Our tool for comparing $\bart(t)$ to $\theta(t)$ is the \emph{Variation of Parameters} (VoP) method \cite{lakshmikantham1998method}.
Initially, $\bart(t)$ could stray away from $\thS$ when the stepsizes may not be small enough to tame the noise. However,  we show that $\|\bart(\tI{n}) - \thS\| = O(n),$ i.e., $\theta_n$ does not stray away from $\thS$ too fast.  Later, we show that we can fix some $n_0$ so that first the TD(0) iterates for $n \geq n_0$ stay within an $O(n_0)$ distance from $\thS.$ Then, after for some additional time, when the stepsizes decay enough, the TD(0) iterates start behaving almost like a noiseless version. These three different behaviours are summarized in Table~\ref{tab:AnalysisOutline} and illustrated in Figure~\ref{fig:trajectory}.

\begin{figure*}
	\begin{center}
	\includegraphics[scale=0.25]{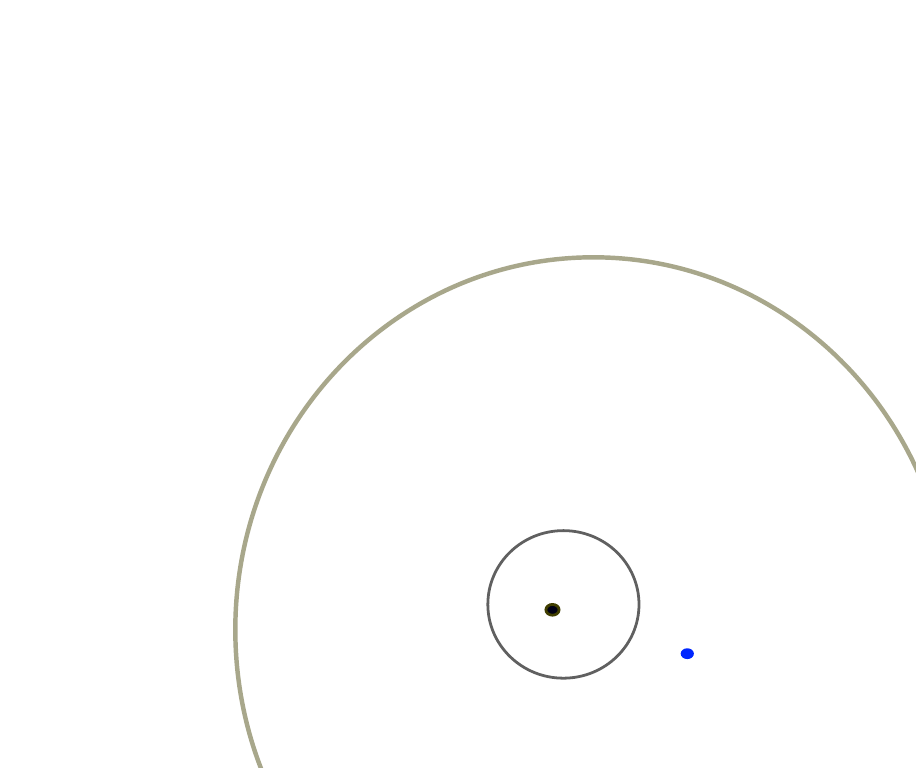}
	\end{center}
	\caption{Visualization of the proof outline. The three balls (from large to small) are respectively the $2\Ro(n_0)$ ball, $\Ro(n_0)$ ball, and $\ei$ ball, where $\Ro(n_0)$ is from Lemma~\ref{lem:WorstCaseThetaBd}.
		The blue curve is the initial, possibly diverging phase of $\bart(t)$. The green curve is $\bart(t)$ when the stepsizes are moderate in size ($t_{n_0} \leq t \leq t_{\nMid}$ in the analysis). Similarly, the red curve is $\bart(t)$ when the stepsizes are sufficiently small ($ t > t_{\nMid}$). The dotted curves are the associated ODE trajectories $\theta(t,t_n,\theta_n)$.}
	\label{fig:trajectory}
\end{figure*}

\subsection{Preliminaries}
We establish some preliminary results here that will be used throughout this section.
Let $s \in \Real,$ and $u_0 \in \dReal.$ Using results from Chapter 6, \cite{hirsch2012differential}, it follows that the solution $\theta(t, s, u_0),$ $t \geq s,$ of \eqref{eq:limiting_ODE} satisfies the relation
\begin{equation} \label{eq:ODE_traj}
\theta(t, s, u_0) = \thS + e^{-A(t - s)} (u_0 - \thS) \enspace.
\end{equation}
As the matrix $A$ is positive definite, for $\theta(t) \equiv \theta(t, s, u_0),$
\[
\frac{d}{dt}\|\theta(t) - \thS\|^2 = -2(\theta(t) - \thS)^\top A (\theta(t) - \thS)<0 \enspace.
\]
Hence
\begin{equation}
\label{eq: norm of thetan - thetastar mon dec}
\|\theta(t', s ,u_0) - \thS\| \leq \|\theta(t, s,u_0) - \thS\|
\enspace,
\end{equation}
for all $t' \geq t \geq s$ and $u_0.$

Let $\lambda$ be as in Theorem~\ref{thm: convergence rate}. From Corollary 3.6, p71, \cite{teschl2012ordinary},  $\exists \Kl \geq 1$ so that $\forall t \geq s$
\begin{equation}
\label{eq:expMatBd}
\|e^{-A(t - s)}\| \leq \Kl e^{-\lambda (t - s)} \enspace.
\end{equation}
Separately, as  $t_{n+1}-t_{k+1} = \sum_{\ell = k + 1}^{n} \alpha_\ell = \sum_{\ell=k+1}^n \tfrac{1}{\ell+1},$
\begin{equation}
\label{eq:bounding the exp of tk}
\frac{(k + 1)^\lambda}{(n + 1)^{\lambda}} \leq e^{-\lambda(\tI{n + 1} - \tI{k + 1})} \leq \frac{(k + 2)^\lambda}{(n + 2)^\lambda} \enspace.
\end{equation}

The following result is a consequence of \ref{assum:bounded_feat} that gives a bound directly on the martingale difference noise as a function of the iterates. We emphasize that this strong behavior of TD(0) is significant in our work. We also are not aware of other works that utilized it even though \ref{assum:bounded_feat} or equivalents are often assumed and accepted. 

\begin{lemma}[Martingale Noise Behavior] \label{lem:martingale_bound_TD0}
	For all $n \geq 0,$
	\[
	\|M_{n + 1}\|\leq \Km [1 + \|\theta_n - \thS\|] \enspace ,
	\]
	where
	\[
	\Km := \frac{1}{4}\max \left\{2 + [1 + \gamma] \|A^{-1}\| \|b\| , 1 + \gamma + 4 \| A \| \right\} \enspace.
	\]
\end{lemma}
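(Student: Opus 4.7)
The plan is to unravel the definition of $M_{n+1}$ from \eqref{Defn:MartDiffNoise} using the triangle inequality, separately bound the TD-error term and the drift term $b-A\theta_n$, and then re-express everything in terms of $\|\theta_n-\thS\|$ by splitting $\theta_n = (\theta_n-\thS)+\thS$ and using $b = A\thS$.

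First, write
\[
M_{n+1} = \underbrace{(r_n + \gamma\phi_n'^\tr\theta_n - \phi_n^\tr\theta_n)\phi_n}_{T_1} - \underbrace{(b - A\theta_n)}_{T_2}.
\]
By the triangle inequality and \ref{assum:bounded_feat}, I would bound $\|T_1\|$ by
$\|\phi_n\|\bigl(|r_n| + (\gamma\|\phi_n'\| + \|\phi_n\|)\|\theta_n\|\bigr) \leq \tfrac{1}{2} + \tfrac{1+\gamma}{4}\|\theta_n\|,$
using $\|\phi_n\|,\|\phi_n'\|\leq 1/2$ and $|r_n|\leq 1$. For $T_2$, since $b = A\thS$, I rewrite $b - A\theta_n = -A(\theta_n - \thS)$, giving $\|T_2\|\leq \|A\|\,\|\theta_n - \thS\|.$

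Second, I would convert the $\|\theta_n\|$ appearing in the bound on $\|T_1\|$ into $\|\theta_n - \thS\|$ via $\|\theta_n\|\leq \|\theta_n-\thS\| + \|\thS\|$ and $\|\thS\| = \|A^{-1}b\| \leq \|A^{-1}\|\,\|b\|$. Substituting, combining $\|T_1\|$ and $\|T_2\|$, and collecting the constant terms and the terms multiplying $\|\theta_n - \thS\|$ separately yields
\[
\|M_{n+1}\| \leq \tfrac{1}{4}\bigl[2 + (1+\gamma)\|A^{-1}\|\|b\|\bigr] + \tfrac{1}{4}\bigl[1 + \gamma + 4\|A\|\bigr]\,\|\theta_n - \thS\|.
\]
Finally, since $K_m$ is defined as the maximum of the two bracketed quantities divided by $4$, each coefficient is at most $K_m$, and the desired inequality $\|M_{n+1}\|\leq K_m(1 + \|\theta_n - \thS\|)$ follows immediately.

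There is no real obstacle: the bound is essentially a bookkeeping exercise once one recognizes that the boundedness of $\phi$ and $r$ directly controls the TD-error factor, while the affine structure $h(\theta) = -A(\theta - \thS)$ makes $b - A\theta_n$ linear in $\theta_n - \thS$. The only mildly delicate point is remembering to translate $\|\theta_n\|$ into $\|\theta_n - \thS\|$ before matching coefficients with the explicit formula for $K_m$.
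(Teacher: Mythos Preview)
Your proof is correct and matches the paper's approach essentially line for line: both split $\theta_n = (\theta_n-\thS)+\thS$, use $b-A\theta_n = -A(\theta_n-\thS)$ and $\|\thS\|\leq\|A^{-1}\|\|b\|$, and apply the feature/reward bounds from \ref{assum:bounded_feat} to arrive at the identical inequality $\|M_{n+1}\|\leq \tfrac{1}{4}[2+(1+\gamma)\|A^{-1}\|\|b\|] + \tfrac{1}{4}[1+\gamma+4\|A\|]\|\theta_n-\thS\|$. The only cosmetic difference is that the paper keeps everything inside a single norm before applying the triangle inequality, whereas you separate $T_1$ and $T_2$ first.
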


\begin{remark} \label{rem: weak noise}
	The noise behavior usually used in the literature (e.g., \cite{sutton2009fast,sutton2009convergent}) is the same as we assumed in \ref{assum:bounded second moments} for Theorem~\ref{thm:ExPDecayRate}:
	\[
	\bE[||M_{n+1}||^2|{\cal F}_n]\leq K_s(1+||\theta_n||^2)\enspace,
	\]
	for some constant $K_s \geq 0$. However, here we assume the stronger \ref{assum:bounded_feat}, which, using a similar proof technique to that of Lemma~\ref{lem:martingale_bound_TD0}, implies
	\[
	\||M_{n + 1}||^2\leq 3[1+\gamma + \max(\|A\|,\|b\|)]^2(1 + ||\theta_n||^2)
	\]
	for all $n \geq 0.$	
\end{remark}

The remaining parts of the analysis rely on the comparison of the discrete TD(0) trajectory $\{\theta_n\}$ to the continuous solution $\theta(t)$ of the limiting ODE. For this, we first switch from directly treating $\{\theta_n\}$ to treating their linear interpolation $\{\bart(t)\}$ as defined in \eqref{eqn:LinInt}. The key idea then is to use the VoP method \cite{lakshmikantham1998method} as in Lemma~\ref{lem:vopApp}, and express $\bart(t)$ as a perturbation of $\theta(t)$ due to two factors: the discretization error and the martingale difference noise. 
Our quantification of these two factors is as follows. 
For the interval $[\tI{\ell_1}, \tI{\ell_2}],$ let
$$
E^\text{d}_{[\ell_1,\ell_2]} := \sum_{k = \ell_1}^{\ell_2-1}\int_{\tI{k}}^{\tI{k + 1}} e^{-A (\tI{n + 1} - \tau) } A [\bart(\tau) - \theta_k] \df \tau \enspace,
$$
and
$$
E^\text{m}_{[\ell_1,\ell_2]} := \sum_{k = \ell_1}^{\ell_2-1}\left[\int_{\tI{k}}^{\tI{k + 1}} e^{-A (\tI{n + 1}  - \tau)}\df \tau\right] M_{k + 1} \enspace.
$$

\begin{corollary}[Comparison of SA Trajectory and ODE Solution] \label{cor:ODE perturbation}
	For every $\ell_2 \geq \ell_1$,
	\[
	\bart(\tI{\ell_2}) - \thS  =  \theta(\tI{\ell_2}, \tI{\ell_1}, \bart(\tI{\ell_1})) - \thS + E^\text{d}_{[\ell_1,\ell_2]} + E^\text{m}_{[\ell_1,\ell_2]} \enspace.
	\]
\end{corollary}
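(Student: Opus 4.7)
The plan is to establish this identity by a direct application of the Variation of Parameters (VoP) formula (cf.\ Lemma~\ref{lem:vopApp}) to the piecewise-linear interpolation $\bart$ defined in~\eqref{eqn:LinInt}. First, observing that on each open interval $(\tI{k}, \tI{k+1})$ one has $\dot{\bart}(\tau) = (\theta_{k+1} - \theta_k)/\alpha_k = h(\theta_k) + M_{k+1}$, I would use the affine structure $h(\theta) = -A(\theta - \thS)$ to rewrite $h(\theta_k) = h(\bart(\tau)) + A[\bart(\tau) - \theta_k]$ for every $\tau$. Substituting back yields, for $\tau \in (\tI{k}, \tI{k+1})$,
\[
\dot{\bart}(\tau) = h(\bart(\tau)) + A[\bart(\tau) - \theta_k] + M_{k+1},
\]
so $\bart$ is a piecewise solution of the limiting ODE~\eqref{eq:limiting_ODE} perturbed by a deterministic discretization forcing $A[\bart(\tau) - \theta_k]$ plus the piecewise-constant martingale forcing $M_{k+1}$.

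Next, I would apply VoP to this perturbed linear ODE on $[\tI{\ell_1}, \tI{\ell_2}]$. Using the explicit propagator $e^{-A(t-s)}$ acting on displacements from $\thS$, as recorded in~\eqref{eq:ODE_traj}, the inhomogeneous solution formula gives
\[
\bart(\tI{\ell_2}) - \thS = \theta(\tI{\ell_2}, \tI{\ell_1}, \bart(\tI{\ell_1})) - \thS + \int_{\tI{\ell_1}}^{\tI{\ell_2}} e^{-A(\tI{\ell_2} - \tau)} \bigl(A[\bart(\tau) - \theta_{k(\tau)}] + M_{k(\tau)+1}\bigr)\df\tau,
\]
where $k(\tau)$ denotes the unique index with $\tau \in [\tI{k(\tau)}, \tI{k(\tau)+1})$. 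Splitting this integral into a sum over the subintervals $[\tI{k}, \tI{k+1}]$ and pulling the piecewise-constant $M_{k+1}$ outside each subinterval integral identifies the two resulting pieces as exactly $E^{\text{d}}_{[\ell_1,\ell_2]}$ and $E^{\text{m}}_{[\ell_1,\ell_2]}$, yielding the claimed identity.

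The one technical subtlety to handle is that $\bart$ is not globally $C^1$: its derivative jumps at each grid point $\tI{k}$, so VoP in its most textbook form does not apply verbatim on the whole interval $[\tI{\ell_1}, \tI{\ell_2}]$. The clean workaround is to invoke VoP on each subinterval $[\tI{k}, \tI{k+1}]$ separately and chain the resulting relations, using continuity of $\bart$ at the $\tI{k}$ together with the semigroup identity $e^{-A(\tI{\ell_2} - \tI{k})} = e^{-A(\tI{\ell_2} - \tI{k+1})} e^{-A(\tI{k+1} - \tI{k})}$; equivalently, Lemma~\ref{lem:vopApp} should already cover this piecewise-continuous forcing setting. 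No quantitative estimation is required at this stage --- the corollary is an exact identity, and the substantive work of bounding $E^{\text{d}}_{[\ell_1,\ell_2]}$ and $E^{\text{m}}_{[\ell_1,\ell_2]}$ is deferred to subsequent lemmas.
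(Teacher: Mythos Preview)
Your proposal is correct and follows essentially the same approach as the paper: the paper's proof simply invokes Lemma~\ref{lem:vopApp} with $i=\ell_1$, $t=\tI{\ell_2}$, and subtracts $\thS$, where Lemma~\ref{lem:vopApp} encapsulates precisely the VoP computation you spell out (rewriting $\dot{\bart}$ as $h(\bart(\tau))+\zeta_1(\tau)+\zeta_2(\tau)$ and applying the variation-of-parameters identity). Your concern about $\bart$ not being globally $C^1$ is handled in the paper by working with the integral form of the equation from the outset, which only needs integrability of the forcing term, so no piecewise chaining is actually required.
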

We highlight that both the paths, $\bart(t)$ and $\theta(t, \tI{\ell_1}, \bart(\tI{\ell_1})),$ $t \geq \tI{\ell_1},$ start at the same point $\bart(\tI{\ell_1})$ at time $\tI{\ell_1}.$ 
Consequently, by bounding $E^\text{d}_{[\ell_1,\ell_2]}$ and $E^\text{m}_{[\ell_1,\ell_2]}$ we can estimate the distance of interest.

\subsection{Part I -- Initial Possible Divergence}
\label{sec:phase1}

In this section, we show that the TD(0) iterates lie in an $O(n)$-ball around $\thS.$ We stress that this is one of the results that enable us to accomplish more than existing literature. Previously, the distance of the initial iterates from $\thS$ was bounded using various assumptions,  often justified  with an artificial projection step which we are able to avoid.

Let $R_0 := 1  + \|\theta_0 - \thS\|.$
\begin{lemma}[Worst-case Iterates Bound]
	\label{lem:WorstCaseThetaBd}
	For $n \geq 0,$
	\[
	\|\theta_n - \thS\| \leq \Ro(n) 
	\enspace,
	\]
	where
	\[
	\Ro(n) := [n+1] \cS R_0
	\]
	and
	$\cS:= 1 + \|\thS\| \leq 1 + \|A^{-1}\| \; \|b\|$
\end{lemma}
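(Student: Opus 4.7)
\textbf{Proof proposal for Lemma~\ref{lem:WorstCaseThetaBd}.}
The plan is to prove this by induction on $n$, working directly with the TD(0) update \eqref{eqn:TD_Method} and using Assumption~\ref{assum:bounded_feat} to bound the per-step increment. The base case $n = 0$ follows immediately from the definition of $R_0$, since $\|\theta_0 - \thS\| = R_0 - 1 \leq R_0 \leq C_* R_0 = \Ro(0)$ (using $C_* \geq 1$).

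For the inductive step, the key is to obtain a one-step recursion of the form
\[
\|\theta_{n+1} - \thS\| \leq \left(1 + \tfrac{\alpha_n}{2}\right)\|\theta_n - \thS\| + \tfrac{\alpha_n C_*}{2}.
\]
To get this, I would apply the triangle inequality to $\theta_{n+1} - \thS = (\theta_n - \thS) + \alpha_n(r_n + \gamma\phi_n'^\tr\theta_n - \phi_n^\tr\theta_n)\phi_n$. Using $\|\phi_n\|,\|\phi_n'\| \leq 1/2$ and $|r_n| \leq 1$ from \ref{assum:bounded_feat}, together with $(1+\gamma)/2 \leq 1$, gives
\[
\|\theta_{n+1} - \theta_n\| \leq \tfrac{\alpha_n}{2}\bigl(1 + \|\theta_n\|\bigr) \leq \tfrac{\alpha_n}{2}\bigl(\|\theta_n - \thS\| + C_*\bigr),
\]
since $1 + \|\theta_n\| \leq 1 + \|\thS\| + \|\theta_n - \thS\| = C_* + \|\theta_n - \thS\|$. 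Combining this with the triangle inequality yields the recursion above.

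Plugging in the inductive hypothesis $\|\theta_n - \thS\| \leq (n+1) C_* R_0$ and the chosen stepsize $\alpha_n = 1/(n+1)$, the right-hand side becomes
\[
(n+1) C_* R_0 + \tfrac{C_* R_0}{2} + \tfrac{C_*}{2(n+1)}.
\]
The additive remainder beyond $(n+1)C_* R_0$ is at most $\tfrac{C_*}{2}(R_0 + 1) \leq C_* R_0$, using $R_0 \geq 1$, so the whole expression is bounded by $(n+2) C_* R_0 = \Ro(n+1)$, closing the induction.

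I do not expect any significant obstacle here: the only subtlety is recognizing that the coefficient $(1+\gamma)/2$ picked up from combining $\gamma\phi_n' - \phi_n$ with $\|\phi_n\| \leq 1/2$ is at most $1$, which keeps the amplification factor in the recursion as small as $1 + \alpha_n/2$. This, paired with the harmonic stepsize $\alpha_n = 1/(n+1)$, is precisely what makes the additive growth compatible with the linear-in-$n$ envelope $\Ro(n)$. The bound $\|\thS\| \leq \|A^{-1}\|\|b\|$ asserted in the statement follows at once from $\thS = A^{-1}b$.
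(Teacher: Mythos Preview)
Your proposal is correct and follows essentially the same approach as the paper: an induction on $n$ using \ref{assum:bounded_feat} to derive the one-step recursion $\|\theta_{n+1}-\thS\|\le(1+\alpha_n/2)\|\theta_n-\thS\|+\tfrac{\alpha_n}{2}\cS$, then closing the induction with $\alpha_n=1/(n+1)$ and $R_0\ge 1$. The only cosmetic difference is that the paper splits $\theta_n=(\theta_n-\thS)+\thS$ directly inside the update, whereas you first bound $\|\theta_{n+1}-\theta_n\|$ and then add $\|\theta_n-\thS\|$; both routes yield the identical recursion and final bound.
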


Next, since $\|M_{n+1}\|$ is linearly bounded by $\|\theta_n - \thS\|$, the following result shows that $\|M_{n+1}\|$ is $O(n)$ as well. It  follows from Lemmas~\ref{lem:martingale_bound_TD0} and \ref{lem:WorstCaseThetaBd}.

\begin{corollary}[Worst-case Noise Bound]
	\label{cor:WorstCaseMBd}
	For $n \geq 0,$
	\[
	\|M_{n + 1}\| \leq  \Km [1 + \cS R_0] [n + 1] \enspace.
	\]
\end{corollary}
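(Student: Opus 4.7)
The plan is to simply combine the two previously established ingredients: the noise bound from Lemma~\ref{lem:martingale_bound_TD0} and the worst-case iterate bound from Lemma~\ref{lem:WorstCaseThetaBd}. Specifically, Lemma~\ref{lem:martingale_bound_TD0} gives a linear control of $\|M_{n+1}\|$ in terms of $\|\theta_n - \thS\|$, so any uniform bound on the iterates immediately transfers to a bound on the noise. Since Lemma~\ref{lem:WorstCaseThetaBd} says $\|\theta_n - \thS\| \leq (n+1)\cS R_0$, substituting this in yields $\|M_{n+1}\| \leq \Km[1 + (n+1)\cS R_0]$.

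The only remaining step is to rearrange the right-hand side into the claimed factored form $\Km[1 + \cS R_0][n+1]$. Since $n \geq 0$ implies $n+1 \geq 1$, we have $1 \leq n+1$, and therefore $1 + (n+1)\cS R_0 \leq (n+1) + (n+1)\cS R_0 = (n+1)[1 + \cS R_0]$. Multiplying through by $\Km$ gives exactly the statement of the corollary. There is essentially no obstacle here — the result is purely a packaging of the earlier lemmas into a convenient linear-in-$n$ bound that will later be plugged into the discretization and martingale error estimates $E^{\text{d}}_{[\ell_1,\ell_2]}$ and $E^{\text{m}}_{[\ell_1,\ell_2]}$ when establishing the Part~II bounds.
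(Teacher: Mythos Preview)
Your proof is correct and matches the paper's approach exactly: the paper simply states that the corollary follows from Lemmas~\ref{lem:martingale_bound_TD0} and \ref{lem:WorstCaseThetaBd}, and your substitution together with the elementary inequality $1 \leq n+1$ to reach the factored form $\Km[1+\cS R_0][n+1]$ is precisely the intended argument.
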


\subsection{Part II -- Rate of Convergence}

%
%
%
%

Here, we bound the probability of the event
\begin{equation*}
\cE(n_0, n_1) := \{\|\theta_n - \thS\| \leq \epsilon \; \forall n > n_0 + n_1\} \enspace
\end{equation*}
for sufficiently large $n_0, n_1;$ how large they should be will be elaborated later.
We do this by comparing the TD(0) trajectory $\theta_{n}$ with the ODE solution $\theta(\tI{n}, \tI{n_0}, \bart(\tI{n_0}))$ $\forall n \geq n_0$; for this we will use Corollary ~\ref{cor:ODE perturbation} along with Lemma~\ref{lem:WorstCaseThetaBd}.  Next, we show that if $n_0$ is sufficiently large, or equivalently the stepsizes $\{\alpha_{n}\}_{n \geq n_0}$ are small enough, then after waiting for a finite number of iterations from $n_0,$ the TD(0) iterates are $\epsilon-$close to $\thS$ w.h.p. The sufficiently long waiting time ensures that the ODE solution $\theta(t_{n + 1}, t_{n_0}, \bar{\theta}_{n_0})$ is $\epsilon-$close to $\thS;$ the small stepsizes ensure that the discretization error and martingale difference noise are small enough.

Let $\delta \in (0, 1),$ and let $\epsilon$ be such that $\epsilon>0.$ Also, for an event $\cE,$ let $\cE^c$ denote its complement and let $\{\cE_1, \cE_2\}$ denote $\cE_1 \cap \cE_2.$ We begin with a careful decomposition of $\cE^c(n_0, n_1),$ the complement of the event of interest. The idea is to break it down into an incremental union of events. Each such event has an inductive structure: good up to iterate $n$ (denoted by $\Gnd$ below) and the $(n + 1)-$th iterate is bad. The good event $\Gnd$ holds when all the iterates up to $n$ remain in an $O(n_0)$ ball around $\thS.$ For $n < n_0 + n_1,$ the bad event means that $\theta_{n + 1}$ is outside the $O(n_0)$ ball around $\thS,$ while for $n \geq n_0 + n_1,$ the bad event means that $\theta_{n + 1}$ is outside the $\ei$ ball around $\thS.$  Formally, for $n_1 \geq 1,$ define the events
\[
\Emid := \hspace{-0.5em}\bigcup_{n = n_0} ^{n_0 + n_1 - 1} \! \left\{ \Gnd, \|\theta_{n + 1} - \thS\| \!  > \!2 \Ro(n_0) \right\} \enspace,
\]
\begin{align}
&\Eaft 
\\
&:= \bigcup_{n = n_0 + n_1} ^{\infty} \left\{\Gnd,  \|\theta_{n + 1} - \thS\| > \min\{\epsilon, 2\Ro(n_0) \} \right\} \;,
\end{align}
and, $\forall n \geq n_0,$ let
\[
\Gnd \! := \! \left\{\bigcap_{k = n_0}^{n} \!\{\|\theta_k - \thS\| \! \leq \! 2\Ro(n_0) \}\right\} \enspace.
\]

Using the above definitions, the decomposition of $\cE^c(n_0, n_1)$ is the following relation.
\begin{lemma}[Decomposition of Event of Interest] \label{lem: decomposition}
	For $n_0, n_1 \geq 1,$
	\[
	\cE^c(n_0, n_1) \subseteq \Emid \cup \Eaft \enspace.
	\]
\end{lemma}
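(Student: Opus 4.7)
The plan is to establish $\cE^c(n_0, n_1) \subseteq \Emid \cup \Eaft$ as a purely deterministic, set-theoretic fact: on any trajectory in $\cE^c(n_0, n_1)$ I will identify a single ``first bad'' index and show that the history up to that index, together with the violation at the next step, populates the appropriate term of either $\Emid$ or $\Eaft$.

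Concretely, I would introduce the pathwise first-exit index
\[
T := \inf\left\{\,n \geq n_0 : \|\theta_{n+1} - \thS\| > B_n\,\right\},
\]
where the threshold
\[
B_n := \begin{cases} 2\Ro(n_0) & n_0 \leq n \leq n_0 + n_1 - 1,\\ \min\{\ei, 2\Ro(n_0)\} & n \geq n_0 + n_1,\end{cases}
\]
is precisely the bad-bound appearing in the $n$-th term of $\Emid$ (for small $n$) and $\Eaft$ (for large $n$). The first step is to verify that $T < \infty$ on $\cE^c(n_0, n_1)$. This is easy: by the definition of $\cE(n_0, n_1)$, there exists some $n^* > n_0 + n_1$ with $\|\theta_{n^*} - \thS\| > \ei$, and then at $n := n^* - 1 \geq n_0 + n_1$ we have $\|\theta_{n+1} - \thS\| > \ei \geq \min\{\ei, 2\Ro(n_0)\} = B_n$, so $T \leq n^* - 1$.

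The second step is to check that $G_{n_0, T}$ holds automatically. By minimality of $T$, for every $n_0 \leq n \leq T - 1$ we have $\|\theta_{n+1} - \thS\| \leq B_n \leq 2\Ro(n_0)$, and the base value $\|\theta_{n_0} - \thS\| \leq \Ro(n_0) \leq 2\Ro(n_0)$ is supplied for free by Lemma~\ref{lem:WorstCaseThetaBd}. Hence $\|\theta_k - \thS\| \leq 2\Ro(n_0)$ for every $n_0 \leq k \leq T$, which is exactly $G_{n_0, T}$. Combined with $\|\theta_{T+1} - \thS\| > B_T$, the pair realizes the $n = T$ term of $\Emid$ if $T \leq n_0 + n_1 - 1$, and the $n = T$ term of $\Eaft$ if $T \geq n_0 + n_1$.

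Because the claim is purely set-theoretic there is no genuine analytic obstacle; the only subtlety is aligning the two regimes of the threshold $B_n$ with the boundary index $n_0 + n_1$ so that every pathwise first exit is captured by exactly one of the two decomposition events. I expect this bookkeeping---choosing a unified stopping rule that simultaneously matches the two indicator families defining $\Emid$ and $\Eaft$---to be the only point where care is needed.
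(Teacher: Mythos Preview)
Your proposal is correct and uses essentially the same idea as the paper: both arguments reduce to a first-bad-index decomposition, with the anchor $\|\theta_{n_0}-\thS\|\le \Ro(n_0)$ supplied by Lemma~\ref{lem:WorstCaseThetaBd}. The only cosmetic difference is that the paper first splits $\cE^c(n_0,n_1)\subseteq G_{n_0,n_0+n_1}^c\cup\big[G_{n_0,n_0+n_1}\cap\cE^c(n_0,n_1)\big]$ and then disjointifies each piece separately, whereas you run a single stopping time $T$ against the piecewise threshold $B_n$; your packaging is slightly cleaner and avoids the paper's side appeal to $\epsilon\le R_0$.
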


For the following results, define the constants
%
%
\[
\cMb :=
\begin{cases}
\frac{6 \Km \Kl 2^{\lambda - 0.5}}{\sqrt{2 \lambda - 1}} & \text{ if $\lambda > 0.5$}\\
\frac{6 \Km \Kl }{\sqrt{1 - 2\lambda}} & \text{ if $\lambda < 0.5$ \enspace.}%
\end{cases}
\]
%

Next, we show that on the ``good'' event $\Gnd,$ the discretization error is small for all sufficiently large $n.$


\begin{lemma}[Part II Discretization Error Bound]
	\label{lem:SmallDE}
	For any
	$$
	n \geq n_0 \geq \tfrac{K_\lambda6  \|A\|(\|A\| + 2\Km)}{\lambda},
	$$
	\[
	\|E_{[n_0,n+1]}^d\| \leq \tfrac{1}{3}[n_0+1]C_*R_0 = \tfrac{1}{3} \Ro(n_0).
	\]
	
	Furthermore, for
	$$n \geq \nMid \geq \left( 1 + \tfrac{K_\lambda6  \|A\| (\|A\| + 2\Km) C_* R_0}{\lambda \min\{ \epsilon, \Ro(n_0)\}} \right)(n_0 + 1)$$
	it thus also holds on $G_{n_0,n}$ that
	\begin{align*}
	\|E_{[\nMid,n+1]}^d\| &\leq \tfrac{1}{3}\min\{\epsilon, [n_0+1]C_*R_0\} \\ &= \tfrac{1}{3}\min\{\epsilon, \Ro(n_0)\}\enspace.
	\end{align*}
\end{lemma}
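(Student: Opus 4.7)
The plan is to bound $\|E^d_{[n_0,n+1]}\|$ by controlling the per-step interpolation drift on the good event $G_{n_0,n}$, reducing the task to estimating a discounted quadratic stepsize sum, and then to recycle the same estimate from starting index $n_c$ for the second claim.

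From the linear interpolation~\eqref{eqn:LinInt} and the TD update~\eqref{eq:SA_traj}, for $\tau \in [t_k, t_{k+1}]$
\[
\bar\theta(\tau) - \theta_k = (\tau - t_k)[h(\theta_k) + M_{k+1}].
\]
On $G_{n_0,n}$, each $\theta_k$ with $n_0 \leq k \leq n$ satisfies $\|\theta_k - \theta^*\| \leq 2R_\rho(n_0)$. Using $h(\theta) = -A(\theta - \theta^*)$ gives $\|h(\theta_k)\| \leq 2\|A\|R_\rho(n_0)$, and Lemma~\ref{lem:martingale_bound_TD0} together with $R_\rho(n_0) \geq 1$ yields $\|M_{k+1}\| \leq 4K_m R_\rho(n_0)$, so
\[
\|\bar\theta(\tau) - \theta_k\| \leq (\tau - t_k)\cdot 2(\|A\| + 2K_m) R_\rho(n_0).
\]
Plugging this into the definition of $E^d_{[n_0,n+1]}$, applying the exponential-matrix bound~\eqref{eq:expMatBd}, and using the crude $\int_{t_k}^{t_{k+1}} e^{-\lambda(t_{n+1}-\tau)}\,d\tau \leq \alpha_k e^{-\lambda(t_{n+1}-t_{k+1})}$ gives
\[
\|E^d_{[n_0,n+1]}\| \leq 2K_\lambda\|A\|(\|A\|+2K_m)R_\rho(n_0)\sum_{k=n_0}^n \alpha_k^2\,e^{-\lambda(t_{n+1}-t_{k+1})}.
\]

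The key estimate is the discounted sum. The plan is to pull out one factor of $\alpha_k \leq \alpha_{n_0} = 1/(n_0+1)$ and then to bound the remaining $\sum_{k=n_0}^n \alpha_k e^{-\lambda(t_{n+1}-t_{k+1})}$ by integral comparison against $\int_0^\infty e^{-\lambda s}\,ds = 1/\lambda$ (exactly in the spirit of the manipulation used in the proof of Theorem~\ref{thm:ClosedFormExpectationBoundEst}), yielding $\sum_{k=n_0}^n \alpha_k^2 e^{-\lambda(t_{n+1}-t_{k+1})} \leq C/[\lambda(n_0+1)]$ uniformly in $\lambda$ over the admissible range, for a constant $C$ of order one. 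Combined with the hypothesis $n_0 \geq 6K_\lambda\|A\|(\|A\|+2K_m)/\lambda$, the overall prefactor drops to $\leq 1/3$, which is the first claim.

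The second claim follows from the identical chain of inequalities with starting index $n_c$ in place of $n_0$; it remains valid on $G_{n_0,n}$ whenever $n_c \geq n_0$, since $G_{n_0,n}$ controls every $\theta_k$ with $n_c \leq k \leq n$. The resulting bound is $\|E^d_{[n_c,n+1]}\| \leq 2K_\lambda\|A\|(\|A\|+2K_m)R_\rho(n_0)/[\lambda(n_c+1)]$; requiring this to be at most $\tfrac{1}{3}\min\{\epsilon,R_\rho(n_0)\}$ and unpacking $R_\rho(n_0) = (n_0+1)C_*R_0$ yields precisely the stated lower bound on $n_c$. I expect the main obstacle to be the sum estimate: a direct appeal to~\eqref{eq:bounding the exp of tk} alone produces a tail whose constant degenerates near $\lambda = 1$, so the factor-out-$\alpha_{n_0}$ trick (reducing to the clean $1/\lambda$ integral) is what is needed to deliver the uniform $\lambda$-dependence appearing in the hypothesis.
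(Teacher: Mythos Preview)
Your approach is correct and matches the paper's proof: bound the per-step drift on $G_{n_0,n}$ via Lemma~\ref{lem:martingale_bound_TD0}, apply \eqref{eq:expMatBd}, factor out one $\alpha_k \leq \alpha_{n'}$, and reduce to a $1/\lambda$ bound. The only difference is that the paper never passes through the discrete sum $\sum_k \alpha_k e^{-\lambda(t_{n+1}-t_{k+1})}$ at all --- after pulling out $(\tau-t_k)\leq\alpha_k\leq\alpha_{n'}$ it keeps the integrals $\int_{t_k}^{t_{k+1}} e^{-\lambda(t_{n+1}-\tau)}\,d\tau$ intact, so they concatenate exactly into $\int_{t_{n'}}^{t_{n+1}} e^{-\lambda(t_{n+1}-\tau)}\,d\tau \leq 1/\lambda$, which delivers the constant $6$ in the hypothesis without your auxiliary factor $C$.
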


The next result gives a bound on the probability that, on the ``good'' event $\Gnd,$ the martingale difference noise is small when $n$ is large. The bound has two forms for the different values of $\lambda$.

\begin{lemma}[Part II Martingale Difference Noise Concentration]
	\label{lem:MartConc}
	Let $n_0 \geq 1$ and $R \geq 0.$ Let $n \geq n' \geq n_0.$
	\begin{itemize}
		\item For $\lambda > 1/2,$
		\begin{align*}
		\Pr\{\Gnd, &\|E_{[n',n+1]}^{m}\| \geq R \} \\
		&\leq 2d^2 \exp\left[- \frac{(n + 1) R^2 }{ 2d^3 \cMb^2 \Ro^2(n_0)}\right] \enspace.
		\end{align*}
		
		\item For $\lambda < 1/2,$
		\begin{align*}
		\Pr\{\Gnd, &\|E_{[n',n+1]}^{m}\| \geq R \} \\ &\leq 2d^2 \exp\left[-\frac{[n' + 1]^{1 - 2 \lambda} (n + 1)^{2 \lambda} R^2}{2d^3 \cMb^2 \Ro^2(n_0)}\right] \enspace.
		\end{align*}
	\end{itemize}
	
\end{lemma}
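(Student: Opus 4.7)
The plan is to decouple the martingale sum from the ``good event'' via truncation, then apply scalar Azuma-Hoeffding coordinate-wise; the two $\lambda$-regimes emerge from the asymptotic behavior of a single scalar sum $\sum (k+1)^{2\lambda-2}$, which changes character at $\lambda = 1/2$.

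First, I would introduce the truncated noise
\[
\tilde M_{k+1} := M_{k+1} \, \mathbf{1}_{\{\|\theta_j - \thS\| \leq 2 \Ro(n_0) \text{ for all } n_0 \leq j \leq k\}}.
\]
Since the indicator is $\cF_k$-measurable, $\{\tilde M_{k+1}\}$ remains a martingale difference sequence with respect to $\{\cF_k\}$, and Lemma~\ref{lem:martingale_bound_TD0} now gives the \emph{deterministic} bound $\|\tilde M_{k+1}\| \leq \Km[1 + 2 \Ro(n_0)]$. Crucially, on $\Gnd$ the truncation is inactive, so with the deterministic matrices $B_k := \int_{\tI{k}}^{\tI{k+1}} e^{-A(\tI{n+1}-\tau)} d\tau$ the truncated sum $\tilde E^m_{[n',n+1]} := \sum_{k=n'}^{n} B_k \tilde M_{k+1}$ coincides pathwise with $E^m_{[n',n+1]}$. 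Hence
\[
\Pr\{\Gnd, \|E^m_{[n',n+1]}\| \geq R\} \leq \Pr\{\|\tilde E^m_{[n',n+1]}\| \geq R\},
\]
eliminating the conditioning entirely.

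Next, combining \eqref{eq:expMatBd}, \eqref{eq:bounding the exp of tk}, and $\alpha_k = 1/(k+1)$ yields the pathwise increment bound $\|B_k \tilde M_{k+1}\| \leq b_k$ with
\[
b_k := \Kl \Km [1 + 2\Ro(n_0)] \cdot \frac{(k+2)^\lambda}{(k+1)(n+2)^\lambda}.
\]
Applying scalar Azuma-Hoeffding to each coordinate of $\tilde E^m_{[n',n+1]}$ (whose entries are bounded in absolute value by $b_k$ since $|v_i| \leq \|v\|_2$) and then union-bounding via $\|v\|_2 \leq \sqrt{d}\,\|v\|_\infty$, with an additional factor of $d$ absorbed from the entry-wise matrix-vector bookkeeping, gives
\[
\Pr\{\|\tilde E^m_{[n',n+1]}\| \geq R\} \leq 2 d^2 \exp\!\left[-\frac{R^2}{2 d^3 \sum_{k=n'}^{n} b_k^2}\right].
\]

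It remains to estimate $\sum_{k=n'}^{n} b_k^2$, which up to constants equals $(n+2)^{-2\lambda} \sum_{k=n'}^n (k+1)^{2\lambda-2}$. For $\lambda > 1/2$ the exponent $2\lambda - 2 > -1$, so $\sum_{k=n'}^n (k+1)^{2\lambda-2} = O((n+1)^{2\lambda-1}/(2\lambda-1))$; multiplying by $(n+2)^{-2\lambda}$ produces an overall $1/(n+1)$ factor, giving the first case with $\cMb = 6 \Km \Kl 2^{\lambda-0.5}/\sqrt{2\lambda-1}$. For $\lambda < 1/2$ the exponent is $< -1$, the sum converges and is dominated by its first term $(n'+1)^{2\lambda-1}/(1-2\lambda)$, yielding the $[n'+1]^{1-2\lambda}(n+1)^{2\lambda}$ factor in the denominator with $\cMb = 6 \Km \Kl /\sqrt{1-2\lambda}$. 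The main obstacle is bookkeeping: tracking every $d$-factor through the vector concentration step and ensuring that the explicit constants recombine exactly into $\cMb$ as defined, rather than only approximating it; conceptually, the argument is an application of Azuma to a truncated bounded-difference martingale, and the only real subtlety is that the martingale property is preserved because both $B_k$ and the truncation indicator are $\cF_k$-measurable.
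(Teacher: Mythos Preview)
Your proposal is correct and follows essentially the same route as the paper: truncate the martingale differences by the $\cF_k$-measurable indicator of the good event, reduce to scalar Azuma--Hoeffding via coordinate-wise union bounds, and then estimate $\sum_k \beta_{k,n}^2$ in the two $\lambda$-regimes. The only minor discrepancy is in the $d$-factor accounting: the paper obtains the $2d^2$ prefactor and $d^3$ in the exponent by a \emph{double} union bound over matrix entries $(i,j)$ with threshold $R/(d\sqrt d)$ for each scalar martingale $\sum_k Q_{k,n}^{ij} M_{k+1}^j 1_{G_{n_0,k}}$, whereas your single-coordinate bound $|(B_k\tilde M_{k+1})_i|\le b_k$ would actually yield the tighter $2d\exp[-R^2/(2d\sum b_k^2)]$; your ``additional factor of $d$ absorbed'' is therefore unnecessary slack rather than a missing step.
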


Having Lemma~\ref{lem:SmallDE}, we substitute $R = \tfrac{\Ro(n_0)}{2}$ in Lemma~\ref{lem:MartConc} and estimate the resulting sum to bound $\Emid$.

\begin{lemma}[Bound on Probability of $\Emid$] \label{lem: bound on Emid}
	Let $n_0 \geq \max\left\{\tfrac{K_\lambda6  \|A\| (\|A\| + 2\Km)}{\lambda}, 2^{\frac{1}{\lambda}}\right\}$ and $n_1 \geq 1.$
	\begin{itemize}
		\item For $\lambda > 1/2,$ 
		\[
		\Pr\{\Emid \} \leq  16d^5 \cMb^2  \exp\left[-\frac{n_0}{8d^3 \cMb^2 }\right] \enspace.
		\]
		
		\item For $\lambda < 1/2,$ 
		\begin{equation*}
		\Pr\{\Emid \} \leq \\ 2d^2 \left[\frac{8d^3 \cMb^2}{ \lambda}\right]^{\frac{1}{2\lambda}}  \frac{\exp[- \frac{n_0}{64d^3 \cMb^2}]}{(n_0 + 1)^{\frac{1 - 2 \lambda}{2 \lambda}}} \enspace.
		\end{equation*}
		
	\end{itemize}
\end{lemma}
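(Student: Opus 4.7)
The strategy is to reduce, for each $n$ in the range $n_0 \le n \le n_0+n_1-1$, the bad event on $\Gnd$ to a pure martingale-noise event that Lemma~\ref{lem:MartConc} controls, and then to sum the resulting tail bound in each eigenvalue regime.

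\emph{Reduction step.} By Corollary~\ref{cor:ODE perturbation} with $\ell_1=n_0$ and $\ell_2=n+1$,
\[
\|\theta_{n+1}-\thS\| \le \|\theta(\tI{n+1},\tI{n_0},\bart(\tI{n_0}))-\thS\| + \|E^{\text{d}}_{[n_0,n+1]}\| + \|E^{\text{m}}_{[n_0,n+1]}\|.
\]
Lemma~\ref{lem:WorstCaseThetaBd} yields $\|\bart(\tI{n_0})-\thS\|=\|\theta_{n_0}-\thS\|\le \Ro(n_0)$ unconditionally, and together with the ODE contraction~\eqref{eq: norm of thetan - thetastar mon dec} this bounds the first summand by $\Ro(n_0)$. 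The stated lower bound on $n_0$ meets the hypothesis of Lemma~\ref{lem:SmallDE} and so bounds the discretization term by $\tfrac{1}{3}\Ro(n_0)$. Hence on $\{\Gnd,\|\theta_{n+1}-\thS\|>2\Ro(n_0)\}$ we must have $\|E^{\text{m}}_{[n_0,n+1]}\| > \tfrac{2}{3}\Ro(n_0) > \tfrac{1}{2}\Ro(n_0)$, so by a union bound
\[
\Pr\{\Emid\} \le \sum_{n=n_0}^{n_0+n_1-1} \Pr\bigl\{\Gnd,\ \|E^{\text{m}}_{[n_0,n+1]}\| \ge \tfrac{1}{2}\Ro(n_0)\bigr\}.
\]

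\emph{Tail estimation.} Apply Lemma~\ref{lem:MartConc} with $n'=n_0$ and $R=\Ro(n_0)/2$, so that each summand is bounded by $2d^2 e^{-c_n}$ where $c_n=(n+1)/(8d^3\cMb^2)$ for $\lambda>1/2$ and $c_n=(n_0+1)^{1-2\lambda}(n+1)^{2\lambda}/(8d^3\cMb^2)$ for $\lambda<1/2$. In the first regime the sum is geometric; extending the index to $n\ge n_0$ and using $1-e^{-x}\ge x/2$ on $x\in(0,1]$ delivers a bound of the form $O(d^5\cMb^2)\,e^{-n_0/(8d^3\cMb^2)}$. In the second regime, since $(n+1)^{2\lambda}\ge(n_0+1)^{2\lambda}$ for $n\ge n_0$, split $c_n$ into a uniform piece at least $(n_0+1)/(16d^3\cMb^2)$ plus a residual $(n_0+1)^{1-2\lambda}(n+1)^{2\lambda}/(16d^3\cMb^2)$, factor out the uniform piece, bound the residual sum by the corresponding integral, and substitute $v=(n_0+1)^{1-2\lambda}(t+1)^{2\lambda}/(16d^3\cMb^2)$. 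The resulting upper incomplete gamma integral $\int_X^{\infty} e^{-v}v^p\,dv$ with $p=(1-2\lambda)/(2\lambda)$ and $X=(n_0+1)/(16d^3\cMb^2)$ is bounded by $2 e^{-X/2}$ once $X$ exceeds the polynomial-exponential crossover $v^*=O(p\log p)$; this is guaranteed by the hypothesis $n_0\ge 2^{1/\lambda}$, which dwarfs any polynomial in $1/\lambda$.

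\emph{Main obstacle.} The only nontrivial work is the $\lambda<1/2$ constant bookkeeping: the convex split of $c_n$, the crossover threshold for the gamma tail, and the Jacobian of the substitution must align so that the $(n_0+1)^{-(1-2\lambda)/(2\lambda)}$ factor survives in the final bound (rather than cancelling, as it would if one used the alternative tail estimate $\int_X^{\infty} e^{-v}v^p\,dv\le 2X^p e^{-X}$), and so that the exponent constant matches the announced $1/64$. Given the right choice, the remaining arithmetic producing the prefactor $[8d^3\cMb^2/\lambda]^{1/(2\lambda)}$ is just exponent tracking.
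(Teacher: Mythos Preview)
Your reduction step is identical to the paper's: decompose $\|\theta_{n+1}-\thS\|$ via Corollary~\ref{cor:ODE perturbation}, bound the ODE piece by $\Ro(n_0)$ using \eqref{eq: norm of thetan - thetastar mon dec} and Lemma~\ref{lem:WorstCaseThetaBd}, bound the discretization piece by $\Ro(n_0)/3$ via Lemma~\ref{lem:SmallDE}, and conclude that the martingale piece must exceed $\Ro(n_0)/2$. For $\lambda>1/2$ your geometric-series summation is equivalent to the paper's right-Riemann-sum estimate and gives the same bound up to harmless constants.

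For $\lambda<1/2$ you take a different route. The paper does not pass to an integral; instead it groups $\sum_n e^{-\ell_{n_0}(n+1)^{2\lambda}}$ by level sets of $\lfloor(n+1)^{2\lambda}\rfloor$, bounds the cardinality of each level set by $\tfrac{1}{2\lambda}(i+1)^{(1-2\lambda)/(2\lambda)}$, splits $e^{-i\ell_{n_0}}=e^{-i\ell_{n_0}/2}e^{-i\ell_{n_0}/2}$, pulls out $\max_i e^{-i\ell_{n_0}/2}(i+1)^{(1-2\lambda)/(2\lambda)}$ (which produces the $\ell_{n_0}^{-(1-2\lambda)/(2\lambda)}$ factor), and sums the remaining $e^{-i\ell_{n_0}/2}$ as a Riemann sum. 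Your integral-and-substitute approach is cleaner conceptually and would also work, but there is a genuine gap in your crossover justification.

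You claim that $n_0\ge 2^{1/\lambda}$ forces $X=(n_0+1)/(16d^3\cMb^2)$ past the threshold $v^\ast=O(p\log p)$ because ``$2^{1/\lambda}$ dwarfs any polynomial in $1/\lambda$.'' But $\cMb^2=36K_m^2K_\lambda^2/(1-2\lambda)$ for $\lambda<1/2$, so the denominator of $X$ carries a $1/(1-2\lambda)$ factor that is \emph{not} polynomial in $1/\lambda$. As $\lambda\uparrow 1/2$ this denominator diverges while $2^{1/\lambda}\to 4$, and your crossover condition fails. The paper's argument never needs $X$ large in absolute terms: the max-estimate extracts the polynomial prefactor unconditionally, and the hypothesis $n_0^{2\lambda}\ge 4$ is used only to absorb a benign $e^{\ell_{n_0}/2}$ into the leading exponential. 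Your route is salvageable if you replace the bare bound $\int_X^\infty e^{-v}v^p\,dv\le 2e^{-X/2}$ by the uniform estimate $\int_X^\infty e^{-v}v^p\,dv\le (2p/e)^p\cdot 2e^{-X/2}$ (no threshold needed); the extra factor $(2p/e)^p$ with $p=(1-2\lambda)/(2\lambda)$ is comfortably dominated by the announced prefactor $[8d^3\cMb^2/\lambda]^{1/(2\lambda)}$.
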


Lastly, we upper bound $\Eaft$ in the same spirit as $\Emid$ in Lemma~\ref{lem: bound on Emid}, again using Lemmas~\ref{lem:SmallDE} and \ref{lem:MartConc}; this time with $ R = \frac{\ei}{3}$ . 
\begin{lemma}[Bound on Probability of $\Eaft$] \label{lem: bound on Eaft}
	Let $$n_0 \geq \max\left\{\tfrac{K_\lambda6  \|A\| (\|A\| + 2\Km)}{\lambda}, 2^{\frac{1}{\lambda}}\right\}$$ and
	\[
	\nMid \geq \left(1  + \tfrac{K_\lambda6 \|A\| (\|A\| + 2\Km) }{\lambda \min\{ \epsilon, \Ro(n_0)\}} \right)\Ro(n_0).
	\]
	%
	Let $n_1 \equiv n_1(\ei,\nMid,n_0) \geq (\nMid + 1) \left[ \frac{6\Kl \Ro(n_0)}{\ei} \right]^{1/\lambda} - n_0.$
	\begin{itemize}
		\item
		For $\lambda > 1/2,$
		\begin{align*}
		\Pr\{&\Eaft \}
		\leq  36 d^5 \cMb^2 \left[\frac{\Ro(n_0)}{\ei}\right]^2\\
		&\times \exp \left[- \frac{(6\Kl)^{1/\lambda}}{ 18 d^3 \cMb^2} (\nMid + 1) \left[\frac{\ei}{\Ro(n_0)}\right]^{2 - \tfrac{1}{\lambda}}  \right] .
		\end{align*}
		\item
		For $\lambda < 1/2,$ 
		\begin{align*}
		\Pr\{\Eaft\}
		\leq
		2d^2& \left[ \frac{ 18 d^3 \cMb^2  [\Ro(n_0)]^2}{\ei^2 \lambda } \right]^{\frac{1}{2\lambda}} \\
		&\times \exp\left[-\frac{K_\lambda^2 }{4 d^3 \cMb^2 } (\nMid + 1) \right].
		\end{align*}
	\end{itemize}
\end{lemma}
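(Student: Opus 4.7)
The strategy parallels that of Lemma~\ref{lem: bound on Emid}, with two changes: the reference epoch is shifted from $\tI{n_0}$ to $\tI{\nMid}$ so that the limiting ODE has been run long enough to pull the trajectory into the $\ei$-ball, and the target radius for the martingale term is reduced from $\Ro(n_0)/2$ to $\ei/3$.

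First, fix $n \geq n_0 + n_1$. Corollary~\ref{cor:ODE perturbation} with $\ell_1 = \nMid$ and $\ell_2 = n+1$ splits $\theta_{n+1}-\thS$ into the ODE drift $\theta(\tI{n+1}, \tI{\nMid}, \bart(\tI{\nMid})) - \thS$, the discretization error $E^{\textnormal{d}}_{[\nMid, n+1]}$, and the martingale term $E^{\textnormal{m}}_{[\nMid, n+1]}$. On $\Gnd$ one has $\|\bart(\tI{\nMid}) - \thS\| \leq 2\Ro(n_0)$, so combining \eqref{eq:ODE_traj}, \eqref{eq:expMatBd} and \eqref{eq:bounding the exp of tk} bounds the drift in norm by $2\Kl \Ro(n_0)[(\nMid+1)/(n+2)]^\lambda$. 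This drops below $\ei/3$ precisely when $n+2 \geq (\nMid+1)[6\Kl \Ro(n_0)/\ei]^{1/\lambda}$, which is exactly what the hypothesis on $n_1$ guarantees. The discretization error is bounded by $\tfrac{1}{3}\min\{\ei,\Ro(n_0)\}$ by the second half of Lemma~\ref{lem:SmallDE}, whose threshold matches the stated lower bound on $\nMid$. Hence if $\|E^{\textnormal{m}}_{[\nMid, n+1]}\| < \ei/3$ in addition, then $\|\theta_{n+1}-\thS\| < \ei$ and the $(n+1)$-st event defining $\Eaft$ cannot occur.

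A union bound over $n \geq n_0+n_1$ combined with Lemma~\ref{lem:MartConc} applied with $n'=\nMid$ and $R=\ei/3$ therefore gives
\[
\Pr\{\Eaft\} \leq \sum_{n=n_0+n_1}^{\infty} \Pr\bigl\{\Gnd,\,\|E^{\textnormal{m}}_{[\nMid,n+1]}\| \geq \tfrac{\ei}{3}\bigr\}.
\]
For $\lambda > 1/2$ each summand decays geometrically in $n$: the tail sum collapses to its leading term times $O(1/c)$ with $c = \ei^2/(18 d^3 \cMb^2 \Ro^2(n_0))$, and substituting the lower bound on $n_0+n_1+1$ into the exponent produces the first claimed estimate. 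For $\lambda < 1/2$ each summand is $2d^2\exp[-c'(n+1)^{2\lambda}]$ with $c' = \ei^2(\nMid+1)^{1-2\lambda}/(18 d^3 \cMb^2 \Ro^2(n_0))$; the tail sum is controlled via a standard incomplete-gamma integral estimate $\int_{n_0+n_1}^\infty e^{-c' x^{2\lambda}}\,dx$, which produces the prefactor $[18 d^3 \cMb^2 \Ro^2(n_0)/(\ei^2 \lambda)]^{1/(2\lambda)}$.

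The main obstacle is the arithmetic in the $\lambda < 1/2$ case. To recover the clean exponent $\tfrac{\Kl^2}{4 d^3 \cMb^2}(\nMid+1)$, one must split $c'(n+1)^{2\lambda}$ into two parts, spending one against the gamma-style integral to control the prefactor and evaluating the other at $n = n_0+n_1$, so that the $(\nMid+1)^{1-2\lambda}$ factor in $c'$ absorbs the $[(n_0+n_1+1)/(\nMid+1)]^{2\lambda}$ power generated by the lower bound on $n_1$ and collapses to a linear $(\nMid+1)$ dependence in the exponent. Everything else reduces to routine bookkeeping against Lemmas~\ref{lem:SmallDE} and \ref{lem:MartConc}.
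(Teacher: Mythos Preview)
Your proposal is correct and follows essentially the same route as the paper's proof: shift the reference time to $\tI{\nMid}$ in Corollary~\ref{cor:ODE perturbation}, use \eqref{eq:expMatBd}--\eqref{eq:bounding the exp of tk} for the ODE drift and the second half of Lemma~\ref{lem:SmallDE} for the discretization error to reduce to a tail sum of $\Pr\{\Gnd,\|E^{\textnormal{m}}_{[\nMid,n+1]}\|\geq \ei/3\}$, then invoke Lemma~\ref{lem:MartConc} and sum. The only cosmetic difference is that for $\lambda<1/2$ the paper reuses verbatim the discrete splitting-and-summing device from the proof of Lemma~\ref{lem: bound on Emid} (equations \eqref{eqn:cardUB}--\eqref{eqn:SimConstants}) rather than phrasing it as an incomplete-gamma integral, but the two are equivalent and your description of how the $(\nMid+1)^{1-2\lambda}$ factor combines with the $n_1$ lower bound to yield the linear $(\nMid+1)$ exponent is exactly right.
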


We are now ready to put the pieces together for proving Theorem~\ref{thm: convergence rate}. For the detailed calculations see end of Appendix~\ref{sec: main thm appendix}.
\begin{proof}[Proof of Theorem~\ref{thm: convergence rate}]
From Lemma~\ref{lem: decomposition}, by a union bound,
\begin{align*}
\Pr\{\cE^c(n_0, n_1)\}
\leq
\Pr\{\Emid \} + \Pr\{\Eaft\} \enspace.
\end{align*}
The behavior of $\Emid$ is dictated by $n_0$, while the behavior of $\Eaft$ by $n_1$.  Using Lemma~\ref{lem: bound on Emid}, we set $n_0$ so that $\Emid$ is less than $\delta/2$, resulting in the condition $n_0  = O\left(\ln\tfrac{1}{\delta}\right)$. Next, using Lemma~\ref{lem: bound on Eaft}, we set $n_1$ so that $\Eaft$ is less than $\delta/2$, resulting in 
$$
n_1 = \tilde{O}\left(\big[{(1/\epsilon)}\ln{(1/\delta)}\big]^{\max\left\{1+{1/\lambda},2\right\}}\right)
$$ for 
$
\lambda > 1/2,
$
and 
$$
n_1 =\tilde{O}\left(\big[{(1/\epsilon)}\ln{(1/\delta)}\big]^{1+{1/\lambda}}\right)
$$ for 
$\lambda < 1/2.
$
\end{proof}

\section{Discussion}
In this work, we obtained the first convergence rate estimates for an unaltered version of the celebrated TD(0). It is, in fact, the first to show rates of an unaltered online TD algorithm of any type. 

As can be seen from Theorem~\ref{thm: convergence rate}, the bound explodes when the matrix $A$ is ill-conditioned. We stress that this is not an artifact of the bound but an inherent property of the algorithm itself. This happens because along the eigenspace corresponding to the zero eigenvalues, the limiting ODE makes no progress and consequently no guarantees for the (noisy) TD(0) method can be given in this eigenspace. As is well known, the ODE will, however, advance in the eigenspace corresponding to the non-zero eigenvalues to a solution which we refer to as the truncated solution. Given this, one might expect that the (noisy) TD(0) method may also converge to this truncated solution. We now provide a short example that suggests that this is in fact not the case. Let 
\[
A := \begin{bmatrix}
1 & 1 \\
0 & 0
\end{bmatrix},
\text{ and } 
b := \begin{bmatrix}
2\\
0
\end{bmatrix}.
\]
Clearly, $\thS := \begin{bmatrix} 1 & 1 \end{bmatrix}^\tr$ is a vector satisfying $b = A \thS$ and the eigenvalues of $A$ are $1$ and $0.$ Consider the update rule $\theta_{n + 1} = \theta_n + \alpha_n[b -A \theta_n + M_{n + 1}]$ with
\[
M_{n + 1} = \begin{bmatrix} 1 \\ 1 \end{bmatrix} Z_{n + 1} [\theta_n(2) - \thS(2)].
\]
Here $v(i)$ is the $i-$th coordinate of vector $v,$ and $\{Z_n\}$ are IID Bernoulli $\{-1, +1\}$ random variables. For an initial value $\theta_0,$ one can see that the (unperturbed) ODE for the above update rule converges to $\begin{bmatrix} -1 &  1 \end{bmatrix}^\tr \theta_0(2) + b;$  this is not $\thS,$ but the truncated solution mentioned above. For the same initial point, predicting the behavior of the noisy update is not easy. Rudimentary simulations show the following. In the initial phase (when the stepsizes are large) the noise dictates how the iterates behave. Afterwards, at a certain stage when the stepsizes become sufficiently small, an “effective $\theta_0$” is detected, from which the iterates start converging to a new truncated solution, corresponding to this “effective $\theta_0$”. This new truncated solution is different per each run and is often very different from the truncated solution corresponding the initial iterate $\theta_0.$

Separately, we stress that our proof technique is general and can be used to provide convergence rates for TD with non-linear function approximations, such as neural networks. Specifically, this can be done using the non-linear analysis presented in \cite{thoppe2015concentration}. There, the more general form of Variation of Parameters is used: the so-called Alekseev's formula. However, as mentioned in Section~\ref{sec:intro}, the caveat there is that the $n_0-$th iterate needs to be in the domain of attraction of the desired asymptotically stable equilibrium point. 
Nonetheless, we believe that one should be able to extend our present approach to non-linear ODEs with a unique global equilibrium point. For non-linear ODEs with multiple stable points, the following approach can be considered. In the initial phase, the location of the SA iterates is a Markov chain with the state space being the domain of attraction associated with different attractors \cite{williams2002stability}. Once the stepsizes are sufficiently small, analysis as in our current paper via Alekseev's formula may enable one to obtain expectation and high probability convergence rate estimates. In a similar fashion, one may obtain such estimates even for the two timescale setup by combining the ideas here with the analysis provided in \cite{dalal2017concentration}.

Finally, future work can extend to a more general family learning rates, including the commonly used adaptive ones. Building upon Remark~\ref{rem: weak noise}, we believe that a stronger expectation bound may hold for TD(0) with uniformly bounded features and rewards. This may enable obtaining tighter convergence rate estimates for TD(0) even with generic stepsizes. 

\section{Acknowledgments}
This research was supported by the European Community's Seventh Framework Programme (FP7/2007-2013) under grant agreement 306638 (SUPREL). A portion of this work was completed when Balazs Szorenyi and Gugan Thoppe were postdocs at Technion, Israel. Gugan's research was initially supported by ERC grant 320422 and is now supported by grants NSF IIS-1546331, NSF DMS-1418261, and NSF DMS-1613261.

\bibliography{TD0_References}
\bibliographystyle{aaai}

\newpage

\appendix
\onecolumn

\section{Variation of Parameters Formula}
\label{sec:VoP}

Let $\theta(t, s, \bart(s)),$ $t\geq s,$ be the solution to \eqref{eq:limiting_ODE} starting at $\bart(s)$ at time $t = s.$ For $k \geq 0,$ and $\tau \in [\tI{k}, \tI{k + 1}),$ let
\begin{equation}
\label{Defn:DiscEr}
\zeta_1(\tau) := h(\theta_k) - h(\bart(\tau)) = A[\bart(\tau) - \theta_k]
\end{equation}
and
\begin{equation}
\label{Defn:MartEr}
\zeta_2(\tau) := M_{k + 1} \enspace .
\end{equation}

\begin{lemma}
	\label{lem:vopApp}
	Let $i \geq 0.$ For $t \geq \tI{i}.$
	\[
	\bart(t) = \theta(t, \tI{i}, \bart(\tI{i})) + \int_{\tI{i}}^{t}e^{-A(t - \tau)} [\zeta_1(\tau) + \zeta_2(\tau)]\df \tau.
	\]
\end{lemma}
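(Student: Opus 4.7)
The plan is to derive an ODE (in the piecewise sense) satisfied by $\bart(t)$, recast it as a linear non-homogeneous ODE driven by $\zeta_1$ and $\zeta_2$, and then apply the standard variation of parameters formula for linear systems. First I would observe that on each open subinterval $(\tI{k}, \tI{k + 1})$ the interpolant $\bart$ is differentiable with constant derivative
\[
\dot{\bart}(\tau) = \frac{\theta_{k + 1} - \theta_k}{\alpha_k} = h(\theta_k) + M_{k + 1},
\]
by the update rule \eqref{eq:SA_traj}. Adding and subtracting $h(\bart(\tau))$ and invoking \eqref{Defn:h}, I can rewrite this as
\[
\dot{\bart}(\tau) = h(\bart(\tau)) + \zeta_1(\tau) + \zeta_2(\tau) = -A[\bart(\tau) - \thS] + \zeta_1(\tau) + \zeta_2(\tau),
\]
valid for $\tau$ in the interior of any subinterval of the grid.

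Next I would set $y(\tau) := \bart(\tau) - \thS$ so that, within each subinterval, $y$ satisfies the linear non-homogeneous ODE $\dot y = -Ay + g(\tau)$ with forcing $g(\tau) := \zeta_1(\tau) + \zeta_2(\tau)$. Multiplying by the integrating factor $e^{A\tau}$, the function $\tau \mapsto e^{A\tau} y(\tau)$ has derivative $e^{A\tau} g(\tau)$ on the interior of each grid subinterval. Since $\bart$, and hence $y$, is continuous everywhere and $g$ is bounded and piecewise constant (in the $M$-part) plus piecewise linear in $\tau$ (in the $\zeta_1$-part), I can integrate piecewise and glue together: for $t \in [\tI{n}, \tI{n + 1}]$, summing over $k = i, \ldots, n - 1$ and adding the tail integral up to $t$ yields
\[
e^{At} y(t) - e^{A\tI{i}} y(\tI{i}) = \int_{\tI{i}}^{t} e^{A\tau} g(\tau)\,\df \tau,
\]
because the boundary terms at each grid point $\tI{k}$ cancel by continuity of $e^{A\tau}y(\tau)$. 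Multiplying by $e^{-At}$ and using \eqref{eq:ODE_traj} to identify $\thS + e^{-A(t - \tI{i})}(\bart(\tI{i}) - \thS) = \theta(t, \tI{i}, \bart(\tI{i}))$ produces the desired formula.

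The bulk of the argument is routine once one notices the piecewise ODE structure; the main (minor) obstacle is that $\bart$ fails to be differentiable at the grid points $\tI{k}$, so one cannot invoke VoP globally in one shot. The clean way around this is exactly the piecewise integration and telescoping above, which requires only continuity of $\bart$ at the grid points (which is immediate from \eqref{eqn:LinInt}) together with piecewise continuity of $g$. No extra regularity on $\{M_n\}$ is needed because $M_{k + 1}$ enters only through $\zeta_2$, which is a piecewise constant (hence integrable) function of $\tau$.
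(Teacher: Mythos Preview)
Your proposal is correct and follows essentially the same route as the paper: both establish that $\bart$ satisfies (piecewise) the perturbed linear equation $\dot{\bart}=h(\bart)+\zeta_1+\zeta_2$ and then solve it via variation of parameters. The only cosmetic difference is that the paper first writes the equivalent integral equation $\bart(t)=\bart(\tI{i})+\int_{\tI{i}}^{t}h(\bart(\tau))\,\df\tau+\int_{\tI{i}}^{t}[\zeta_1+\zeta_2]\,\df\tau$ and then invokes the VoP formula from \cite{lakshmikantham1998method} as a black box, whereas you carry out the integrating-factor computation explicitly and handle the grid points by piecewise integration and continuity; the two are the same argument at different levels of detail.
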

\begin{proof}
	For $n \geq 0$ and $t \in [\tI{n}, \tI{n + 1}),$ by simple algebra,
	\[
	\bart(t) - \bart(\tI{i}) = \frac{t - \tI{n}}{\alpha_n} [\theta_{n + 1} - \theta_n] + \sum_{k = i}^{n - 1}[\theta_{k + 1} - \theta_k].
	\]
	Combining this with \eqref{eq:SA_traj}, \eqref{Defn:DiscEr}, and \eqref{Defn:MartEr}, and using the relations $\tau - \tI{n} = \int_{\tI{n}}^{t} \df \tau$ and $\alpha_k = \int_{\tI{k}}^{\tI{k + 1}} \df \tau,$  we have
	\[
	\bart(t) = \bart(\tI{i}) + \int_{\tI{i}}^{t} h(\bart(\tau)) \df \tau + \int_{\tI{i}}^{t} [\zeta_1(\tau) + \zeta_2(\tau)] \df \tau.
	\]
	Separately, writing \eqref{eq:limiting_ODE} in integral form, we have
	\[
	\theta(t, \tI{i}, \bart(\tI{i})) = \bart(\tI{i}) + \int_{\tI{i}}^{t} h(\theta(\tau)) \df \tau.
	\]
	From the above two relations and the \vop formula \cite{lakshmikantham1998method}, the desired result follows.
\end{proof}

\section{Supplementary Material for Proof of Theorem~\ref{thm: convergence rate}}
\label{sec: main thm appendix}
\begin{proof}[Proof of Lemma \ref{lem:martingale_bound_TD0}]
	We have
	\begin{eqnarray*}
		\|M_{n+1}\| & = & \| r_n\phi_n + (\gamma \phi_n' - \phi_n)^\top\theta_n\phi_n - [b - A\theta_n] \|\\
		& = & \| r_n\phi_n + (\gamma \phi_n' - \phi_n)^\top(\theta_n-\thS)\phi_n  \\
		& & \quad+(\gamma\phi_n'-\phi_n)^\top\thS\phi_n + A(\theta_n-\thS) \| \\
		& \leq  &\frac{1}{2} + \frac{[1 + \gamma]}{4} \| A^{-1} \| \; \| b \|  + \frac{[1 + \gamma + 4 \| A \|]}{4} \|\theta_n-\thS\|,
	\end{eqnarray*}
	where the first relation follows from \eqref{Defn:MartDiffNoise}, the second holds as $b = A \thS,$ while the third follows since \ref{assum:bounded_feat} holds and $\thS = A^{-1} b.$ The desired result is now easy to see. \end{proof}

\begin{proof}[Proof of Corollary~\ref{cor:ODE perturbation}]
	The result follows by using Lemma~\ref{lem:vopApp} from Appendix~\ref{sec:VoP}, with $i = \ell_1,$ $t = \tI{\ell_2},$ and subtracting $\thS$ from both sides.
\end{proof}

\begin{proof}[Proof of Lemma~\ref{lem:WorstCaseThetaBd}]
	The proof is by induction. The claim holds trivially for $n = 0.$ Assume the claim for $n.$ Then from \eqref{eqn:TD_Method},
	\begin{equation*}
	\|\theta_{n + 1} - \thS\|  \leq \|\theta_n - \thS\| + \alpha_n\| [\gamma \phi_n' - \phi_n]^\tr \thS \phi_n\| 
	+ \alpha_n \|r_n \phi_n\| + \alpha_n\| [\gamma \phi_n' - \phi_n]^\tr[\theta_n - \thS] \phi_n\| \enspace.
	\end{equation*}
	Applying the Cauchy-Schwarz inequality, and using \ref{assum:bounded_feat} and the fact that $\gamma \leq 1,$ we have
	\[
	\|\theta_{n + 1} - \thS\| \leq \|\theta_n - \thS\| + \frac{ \alpha_n}{2} \cS + \frac{\alpha_n}{2} \|\theta_n - \thS\|.
	\]
	Now as $1 \leq R_0,$ we have
	\[
	\|\theta_{n + 1} - \thS\| \leq \left[1 + \frac{\alpha_n}{2} \right]\|\theta_n - \thS\| + \frac{ \alpha_n}{2} \cS R_0.
	\]
	Using the induction hypothesis and the stepsize choice, the claim for $n + 1$ is now easy to see. The desired result thus follows.
\end{proof}

\begin{proof}[Proof of Lemma~\ref{lem: decomposition}]
	For any two events $\cE_1$ and $\cE_2,$ note that
	\begin{equation}
	\label{eqn:sDec1}
	\cE_1 = [\cE_2^c  \cap \cE_1] \cup [\cE_2 \cap \cE_1] \subseteq \cE_2^c \cup [\cE_2 \cap \cE_1] \enspace .
	\end{equation}
	Separately, for any sequence of events $\{\cE_k\},$ observe that
	\begin{equation}
	\label{eqn:sDec2}
	\bigcup_{k = 1}^{m} \cE_k =  \left[\bigcup_{k = 1}^{m} \left(\left[\bigcup_{i = 1}^{k - 1} \cE_i\right]^c  \cap \cE_k\right)\right] \enspace,
	\end{equation}
	where $\bigcup_{i = i_1}^{i_2} \cE_i  = \emptyset$ whenever $i_1 > i_2.$ Using \eqref{eqn:sDec1}, we have
	\begin{equation}
	\label{eqn:Split}
	\cE^c(n_0, n_1) \subseteq G_{n_0, n_0 + n_1}^c
	\cup [G_{n_0, n_0 + n_1} \cap \cE^c(n_0, n_1)] \enspace.
	\end{equation}
	From Lemma~\ref{lem:WorstCaseThetaBd}, $\{\|\theta_{n_0}- \thS\| \leq \Ro(n_0)\}$ is a certain event. Hence it follows from \eqref{eqn:sDec2} that
	\begin{equation}
	\label{eqn:DT1}
	G_{n_0, n_0 + n_1}^c = \Emid.
	\end{equation}
	Similarly, from \eqref{eqn:sDec2} and the fact that $\epsilon \leq R_0,$
	\begin{equation}
	\label{eqn:DT2}
	G_{n_0, n_0 + n_1} \cap \cE^c(n_0, n_1) \subseteq \Eaft \enspace.
	\end{equation}
	Substituting \eqref{eqn:DT1} and \eqref{eqn:DT2} in \eqref{eqn:Split} gives
	\[
	\cE^c(n_0, n_1) \subseteq \Emid \cup \Eaft \enspace.
	\]
	The claimed result follows.
\end{proof}

\begin{proof}[Proof of Lemma~\ref{lem:SmallDE}]
	For $n \geq n' \geq n_0 \geq 0$,
	by its definition and the triangle inequality,
	\[
	\| \eDbp\| \! \leq \! \sum_{k = n'}^{n}\! \! \int_{\tI{k}}^{\tI{k + 1}} \hspace{-1em} \|e^{-A(\tI{n + 1} \! - \tau)}\| \| A\| \| \bart(\tau) - \theta_k \| \df \tau.
	\]
	Fix a $k \in \{n', \ldots, n\}$ and $\tau \in [\tI{k}, \tI{k + 1}).$ Then using \eqref{eqn:LinInt}, \eqref{eq:SA_traj}, \eqref{Defn:h}, and the fact that $(\tau - \tI{k}) \leq \alpha_k,$ we have
	\[
	\|\bart(\tau) - \theta_k\| \leq \alpha_k[\|A\| \|\theta_k - \thS\| + \|M_{k + 1}\|] \enspace.
	\]
	Combining this with Lemma~\ref{lem:martingale_bound_TD0}, we get
	\[
	\|\bart(\tau) - \theta_k\| \leq \alpha_k[\Km + (\|A\| + \Km) \|\theta_k - \thS\|] \enspace.
	\]
	As the event $\Gnd$ holds, and since
	$\alpha_k \leq \alpha_{n'}$ and $\Ro(n_0) \geq 1,$
	we have
	\[
	\|\bart(\tau) - \theta_k\| \leq 2[\|A\| + 2\Km] \alpha_{n'}[n_0+1] C_* R_0 \enspace.
	\]
	From the above discussion, \eqref{eq:expMatBd}, the stepsize choice, and the facts that
	\[
	\sum_{k = n'}^{n} \int_{\tI{k}}^{\tI{k + 1}} \hspace{-0.5em} e^{-\lambda(\tI{n + 1} - \tau)} \df \tau
	= \int_{\tI{n'}}^{\tI{n + 1}} \hspace{-0.5em}  e^{-\lambda(\tI{n + 1} - \tau)} \df \tau
	\leq \frac{1}{\lambda} \enspace,
	\]
	and $\alpha_k \leq \alpha_{n'} \leq \alpha_{n_0},$ we get

	\[
	\|\eDbp\|
	\leq
	\tfrac{K_\lambda 2 \|A\| (\|A\| + 2\Km)(n_0+1) C_* R_0}{\lambda (n'+1)}
	\enspace.
	\]
	The desired results now follow by substituting $n'$ first with $n_0$ and then with $n_c.$
\end{proof}

\begin{proof}[Proof of Lemma~\ref{lem:MartConc}]
	Let $Q_{k, n} = \int_{\tI{k}}^{\tI{k + 1}} e^{-A(\tI{n + 1} - \tau)} \df \tau.$ Then, for any $n_0 \leq n' \leq n,$
	\[
	E_{[n',n+1]}^{m} = \sum_{k = n'}^{n} Q_{k, n} M_{k + 1} \enspace,
	\]
	a sum of martingale differences. When the event $\Gnd$ holds, it follows that the indicator $1_{G_{n_0,k}} = 1$ $\forall k \in \{n_0, \ldots, n', \ldots n\}.$ Hence, for any $R \geq 0,$
	\begin{eqnarray*}
		\Pr\{\Gnd, \|E_{[n',n+1]}^{m}\| \geq R \} & = & \Pr\left\{\Gnd, \left\|\sum_{k = n'}^{n} Q_{k, n} M_{k + 1} 1_{G_{n_0,k}}\right\| \geq R  \right\} \\
		& \leq & \Pr\left\{\left\|\sum_{k = n'}^{n} Q_{k, n} M_{k + 1} 1_{G_{n_0,k}}\right\| \geq R  \right\}. %
	\end{eqnarray*}
	
	Let $Q^{ij}_{k, n}$ be the $i,j-$th entry of the matrix $Q_{k, n}$ and let $M_{k + 1}^j$ be the $j-$th coordinate of $M_{k + 1}.$ Then using the union bound twice on the above relation, we have
	\[
	\Pr\{\Gnd, \|E_{[n',n+1]}^{m}\| \geq R \} \leq \sum_{i= 1}^{d} \sum_{j = 1}^d\Pr\left\{\left|\sum_{k = n'}^{n} Q^{ij}_{k, n} M^j_{k + 1} 1_{G_{n_0,k}}\right| \geq \frac{R}{d \sqrt{d}}  \right\}.
	\]
	As $|Q_{k, n}^{ij} M_{k + 1}^j |  1_{G_{n_0,k}} \leq \|Q_{k, n}\| \|M_{k + 1}\|  1_{G_{n_0,k}} =: \beta_{k, n},$ Azuma-Hoeffding inequality now gives
	\begin{equation}
	\Pr\{\Gnd, \|E_{[n',n+1]}^{m}\| \geq R \} \leq 2d^2 \exp\left[-\frac{R^2}{2 d^3 \sum_{k =n'}^{n} \beta_{k, n}^2}\right] \enspace. \label{eqn:IntConcBd}
	\end{equation}
	On the event $G_{n_0, k},$  $\|\theta_{k} - \thS\| \leq 2 \Ro(n_0)$ by definition. Hence from Lemma~\ref{lem:martingale_bound_TD0}, we have
	\begin{equation}
	\label{eqn:CondMBd}
	\|M_{k + 1}\| 1_{G_{k}} \leq 3\Km \Ro(n_0) \enspace.
	\end{equation}
	Also from \eqref{eq:expMatBd}, $\|Q_{k, n} \| \leq \Kl e^{- \lambda (\tI{n + 1} - \tI{k + 1})} \alpha_k.$ Combining the two inequalities, and using \eqref{eq:bounding the exp of tk} along with the fact that $1/(k + 1) \leq 2/(k + 2),$ we get
	\begin{eqnarray*}
		\beta_{k, n} & \leq &  3 \Km \Kl \Ro(n_0)  e^{-\lambda(\tI{n + 1} - \tI{k + 1})} \alpha_k\\
		& \leq & 6 \Km \Kl  \Ro(n_0) \frac{(k + 2)^{\lambda - 1}}{(n + 2)^\lambda } \enspace.
	\end{eqnarray*}

	Consider the case $\lambda > 1/2.$ By treating the sum as a right Riemann sum, we have
	\[
	\sum_{k=n'}^{n} (k + 2)^{2\lambda - 2}\leq (n + 3)^{2\lambda - 1}/(2\lambda - 1) \enspace.
	\]
	As $(n + 3) \leq 2(n + 2)$ and $(n + 2) \geq (n + 1),$ we have
	\[
	\sum_{k = n'}^{n} \beta_{k, n}^2 \leq \cMb^2  \frac{ \Ro^2(n_0)}{n + 1} \enspace.
	\]
	
	Now consider the case $\lambda < 1/2.$ Again treating the sum as a right Riemann sum,  we have
	\[
	\sum_{k = n'}^{n} (k + 2)^{2 \lambda - 2} \leq \frac{1}{(1 - 2 \lambda) [n' + 1]^{1 - 2 \lambda}} \enspace.
	\]
	As $(n + 2) \geq (n + 1),$ it follows that
	\[
	\sum_{k = n'}^{n} \beta_{k,n }^2 \leq \cMb^2  \frac{\Ro^2(n_0)}{[n' + 1]^{1 - 2\lambda} (n + 1)^{2\lambda}} \enspace.
	\]
	Substituting $\sum_{k = n_0}^{n} \beta_{k, n}^2$ bounds in \eqref{eqn:IntConcBd}, the desired result is easy to see.
\end{proof}

\subsection{Conditional Results on the Bad Events}

On the first ``bad'' event $\Emid,$ the TD(0) iterate $\theta_n$ for at least one $n$ between $n_0 + 1$ and $n_0 + n_1$ leaves the $2 \Ro(n_0)$ ball around $\thS.$ Lemma~\ref{lem: bound on Emid} shows that this event has low probability. Its proof is the following.

\begin{proof}[Proof of Lemma~\ref{lem: bound on Emid}]
	From Corollary~\ref{cor:ODE perturbation}, we have
	\begin{equation*}
	\|\theta_{n + 1} - \thS\| \leq \|\theta(\tI{n + 1}, \tI{n_0}, \theta_{n_0}) - \thS\|
	+ \|\eDb\| + \|\eMb\| \enspace.
	\end{equation*}
	Suppose the event $\Gnd$ holds. Then from  \eqref{eq: norm of thetan - thetastar mon dec},
	\[
	\|\theta(\tI{n + 1}, \tI{n_0}, \theta_{n_0}) - \thS\| \leq \|\theta_{n_0} - \thS\| \leq \Ro(n_0) \enspace.
	\]
	Also, as $n_0 \geq \tfrac{K_\lambda6 \|A\| (\|A\| + 2\Km)}{\lambda}$, by Lemma~\ref{lem:SmallDE}, $\|\eDb\| \leq \Ro(n_0)/3.$
	From all of the above, we have
	\begin{equation*}
	\{\Gnd, \|\theta_{n + 1} - \thS\| > 2\Ro(n_0)\} \subseteq  \{\Gnd, \|\eMb\| > \Ro(n_0)/2\} \enspace.
	\end{equation*}
	From this, we get
	\begin{align*}
	\Emid \subseteq & \bigcup_{n = n_0}^{n_0 + n_1 - 1} \! \left\{\Gnd, \|\eMb\| > \tfrac{\Ro(n_0)}{2} \right\} \\
	\subseteq & \bigcup_{n = n_0}^{\infty} \! \left\{\Gnd, \|\eMb\| > \tfrac{\Ro(n_0)}{2} \right\} \enspace.
	\end{align*}
	Consequently,
	\begin{equation}
	\label{eqn:IntBd}
	\Pr\{\Emid\} \leq \\ \sum_{n = n_0}^{\infty} \! \!\Pr\!\left\{\Gnd, \|\eMb\| \! > \! \tfrac{\Ro(n_0)}{2} \right\}.
	\end{equation}
	
	Consider the case $\lambda > 1/2.$ Lemma~\ref{lem:MartConc} shows that
	\begin{equation*}
	\Pr\left\{\Gnd, \|\eMb\| > \tfrac{\Ro(n_0)}{2}  \right\}  \leq  2d^2 \exp\left[-\frac{n + 1}{8 d^3 \cMb^2 }\right] \enspace.
	\end{equation*}
	Substituting this in \eqref{eqn:IntBd} and treating the resulting expression as a right Riemann sum, the desired result is easy to see.
	
	Now consider the case $\lambda < 1/2.$ From Lemma~\ref{lem:MartConc}, we get
	\begin{equation*}
	\Pr\left\{\Gnd, \|\eMb\| > \tfrac{\Ro(n_0)}{2}  \right\}  \leq
	2d^2 \exp\left[-\frac{(n_0 + 1)^{1 - 2\lambda} (n + 1)^{2 \lambda}}{8d^3 \cMb^2 }\right] \enspace.
	\end{equation*}
	Let $\ell_{n_0} := (n_0 + 1)^{1 - 2\lambda} / 8d^3 \cMb^2.$ Observe that
	\begin{align}
	& \sum_{n = n_0}^{\infty} \exp[- \ell_{n_0} (n + 1)^{2\lambda}] \notag\\
	& \leq  \sum_{i = \lfloor (n_0 + 1)^{2\lambda}\rfloor }^{\infty} e^{-i \ell_{n_0}} |\{n :  \lfloor (n + 1)^{{2\lambda}}\rfloor = i\} | \notag\\
	& \leq  \frac{1}{2\lambda}  \sum_{i = \lfloor (n_0 + 1)^{2\lambda}\rfloor }^{\infty} e^{-i \ell_{n_0}} \; (i + 1)^{\frac{1- {2\lambda}}{{2\lambda}}} \label{eqn:cardUB} \\
	& \leq  \frac{1}{{2\lambda}}  \sum_{i = \lfloor (n_0 + 1)^{2\lambda}\rfloor }^{\infty} e^{-i \ell_{n_0}/2}  e^{-i \ell_{n_0}/2}\; (i + 1)^{\frac{1- {2\lambda}}{{2\lambda}}} \notag\\
	& \leq  \frac{1}{{2\lambda}} \left[\frac{( 1- {2\lambda})}{\ell_{n_0} {\lambda}}\right]^{\frac{1 - {2\lambda}}{{2\lambda}}} \! \! e^{\frac{1}{2}[\ell_{n_0}-\frac{1-2\lambda}{\lambda}]} \hspace{-1em} \sum_{i = \lfloor (n_0 + 1)^{2\lambda}\rfloor }^{\infty} \! \! e^{-i \ell_{n_0}/2} \label{eqn:MaxEst}\\
	& \leq  \frac{1}{{\ell_{n_0}\lambda}} \left[\frac{( 1- {2\lambda})}{\ell_{n_0} {\lambda}}\right]^{\frac{1 - {2\lambda}}{{2\lambda}}} \! \! e^{\frac{1}{2}[\ell_{n_0}-\frac{1-2\lambda}{\lambda}]}  e^{-\frac{\ell_{n_0}{n_0}^{2\lambda}}{4}} \label{eqn:RightRSum}\\
	& \leq \left[\frac{1- 2\lambda}{e}\right]^{\frac{1 - 2\lambda}{2 \lambda}} \left[\frac{8d^3 \cMb^2}{\lambda}\right]^{\frac{1}{2\lambda}}  \frac{\exp[- \frac{n_0}{64d^3 \cMb^2}]}{(n_0 + 1)^{\frac{1 - 2 \lambda}{2 \lambda}}} \label{eqn:n0Subs} \enspace \\
	& \leq \left[\frac{8d^3 \cMb^2}{ \lambda}\right]^{\frac{1}{2\lambda}}  \frac{\exp[- \frac{n_0}{64d^3 \cMb^2}]}{(n_0 + 1)^{\frac{1 - 2 \lambda}{2 \lambda}}} \label{eqn:SimConstants} \enspace.
	\end{align}
	The relation \eqref{eqn:cardUB} follows, as by calculus,
	\[
	|\{n :  \lfloor (n + 1)^{{2\lambda}}\rfloor = i\} | \leq \frac{1}{2\lambda}(i + 1)^{\frac{1- {2\lambda}}{{2\lambda}}} \enspace,
	\]
	\eqref{eqn:MaxEst} holds since, again by calculus,
	\[
	\max_{i \geq 0} e^{ - i \ell_{n_0}/2} (i + 1)^{\frac{1 - 2 \lambda}{2 \lambda}}
	\! \leq \! \left[\frac{( 1- {2\lambda})}{\ell_{n_0} {\lambda}}\right]^{\frac{1 - {2\lambda}}{{2\lambda}}} \! \! \! e^{\frac{1}{2}[\ell_{n_0}-\frac{1-2\lambda}{\lambda}]} \enspace,
	\]
	\eqref{eqn:RightRSum} follows by treating the sum as a right Riemann sum, \eqref{eqn:n0Subs} follows by substituting the value of $\ell_{n_0}$ and using the fact that $n_0^{2\lambda} \geq 4$ and \eqref{eqn:SimConstants} holds since $1 - 2 \lambda \leq 1.$ Substituting \eqref{eqn:SimConstants} in \eqref{eqn:IntBd}, the desired result follows.
\end{proof}

On the second ``bad'' event $\Eaft,$ the TD(0) iterate $\theta_n$ for at least one $n > n_0 + n_1$ lies outside the $\min\{\ei, 2 \Ro(n_0)\}$ radius ball around $\thS.$ Lemma~\ref{lem: bound on Eaft} shows that this event also has low probability.

\begin{proof}[Proof of Lemma~\ref{lem: bound on Eaft}]
	Assume the event $\Gnd$ holds for some $n \geq \nMid$.
	Then
	\[
	\|\theta_{\nMid} - \thS\| \leq 2\Ro(n_0).
	\]
	Hence from \eqref{eq:ODE_traj} and \eqref{eq:expMatBd}, for $t \geq \tI{\nMid},$ we have
	\begin{equation}
	\label{eqn:ODEDecayRt}
	\|\theta(t, \tI{\nMid}, \theta_{\nMid}) - \thS\| \leq \Kl e^{-\lambda (t - \tI{\nMid})} 2\Ro(n_0) \enspace.
	\end{equation}
	%
	

	%
	Now as $n_1 \geq (\nMid + 1) \left[ \frac{6\Kl \Ro(n_0)}{\ei} \right]^{1/\lambda} - n_0,$ it follows that $\forall n \geq n_0 + n_1,$
	\[
	\|\theta(\tI{n + 1}, \tI{\nMid}, \theta_{\nMid}) - \thS\| \leq \frac{\epsilon}{3} \enspace.
	\]
	Also, as $\nMid \geq \left(1 + \tfrac{K_\lambda6  \|A\| (\|A\| + 2\Km) C_* R_0}{\lambda \min\{ \epsilon, \Ro(n_0)\}} \right)(n_0 + 1),$ from Lemma~\ref{lem:SmallDE}, we have
	%
	%
	$\|E_{[\nMid,n+1]}^{\textrm d}\| \leq \epsilon/3$ for all $n \geq \nMid.$
	Combining these with Corollary~\ref{cor:ODE perturbation}, it follows that $\forall n \geq n_0 + n_1,$
	\begin{eqnarray*}
		\left\{\Gnd, \|\theta_{n + 1} - \thS\| > \min\{\epsilon, 2\Ro(n_0) \} \right\} & \subseteq &
		\{\Gnd, \|\theta_{n + 1} - \thS\| > \epsilon \}\\
		& \subseteq & \{\Gnd, \|E_{[\nMid,n+1]}^{\textrm m}\| \geq \tfrac{\epsilon}{3}\} \enspace.
	\end{eqnarray*}
	Hence from the definition of $\Eaft,$
	\begin{equation}
	\label{eqn:IntEaftBd}
	\Pr\{\Eaft\} \leq  \sum_{n = n_0 + n_1}^{\infty} \Pr\left\{\Gnd,  \|E_{[\nMid,n+1]}^{\textrm m}\| \geq \tfrac{\epsilon}{3}  \right\} \enspace.
	\end{equation}
	
	Consider the case $\lambda > 1/2.$ Lemma~\ref{lem:MartConc} and the definition of $\Ro(n_0)$ in Theorem~\ref{lem:WorstCaseThetaBd} shows that
	\begin{equation*}
	\Pr\left\{\Gnd,  \|E_{[\nMid,n+1]}^{\textrm m}\| \geq \tfrac{\epsilon}{3}  \right\}
	\leq
	2d^2 \exp\left[- \frac{(n_0 + 1)^{-2}(n + 1) \epsilon^2}{ 18 d^3 \cMb^2  \cS^2 R_0^2}\right] \enspace.
	\end{equation*}
	Using this in \eqref{eqn:IntEaftBd} and treating the resulting expression as a right Riemann sum, we get
	\begin{align*}
	\Pr\{\Eaft \}
	\leq  36 d^5 \cMb^2  \left[\frac{\Ro(n_0)}{\ei} \right]^2
	\exp \left[- \frac{(n_0 + n_1) \epsilon^2}{ 18 d^3 \cMb^2 [\Ro(n_0)]^2}\right] \enspace.
	\end{align*}
	Substituting the given relation between $n_1$ and $\nMid,$ the desired result is easy to see.
	
	Consider the case $\lambda < 1/2.$ From Lemma~\ref{lem:MartConc} and the definition of $\Ro(n_0)$ in Theorem~\ref{lem:WorstCaseThetaBd}, we have
	\begin{equation*}
	\Pr\left\{\Gnd,  \|E_{[\nMid,n+1]}^{m}\| \geq \tfrac{\epsilon}{3} \right\} \\
	\leq
	2d^2 \exp\left[- \frac{(\nMid + 1)^{1- 2 \lambda} (n + 1)^{2\lambda} \epsilon^2}{  18 d^3 \cMb^2  [\Ro(n_0)]^2}\right] \enspace.
	\end{equation*}
	Let $k_{\nMid} := \ei^2(\nMid + 1)^{1-2\lambda}/(  18 d^3 \cMb^2 [\Ro(n_0)]^2  ).$
	%
	\begin{equation*}
	\Pr\left\{\Gnd,  \|E_{[\nMid,n+1]}^{m}\| \geq \tfrac{\epsilon}{3} \right\} \\
	\leq 2d^2 \exp\left[- k_{\nMid} \,  (n + 1)^{2\lambda}\right] \enspace.
	\end{equation*}
	Then by the same technique that we use to obtain \eqref{eqn:RightRSum} in the proof  for Lemma~\ref{lem: bound on Emid}, we have
	\begin{align}
	&\sum_{n = n_0+n_1}^{\infty} \exp[- k_{\nMid} (n + 1)^{2\lambda}] \notag \\
	& \leq  \frac{1}{{k_{\nMid}\lambda}} \left[\frac{( 1- {2\lambda})}{k_{\nMid} {\lambda}}\right]^{\frac{1 - {2\lambda}}{{2\lambda}}} \! \! e^{\frac{1}{2}[k_{\nMid}-\frac{1-2\lambda}{\lambda}]}  e^{-\frac{k_{\nMid}{(n_0+n_1)}^{2\lambda}}{4}} \notag \\
	& \leq \left[\frac{1}{k_{\nMid} \lambda}\right]^{\frac{1}{2\lambda}} e^{-\frac{k_{\nMid}{(n_0+n_1)}^{2\lambda}}{8}} \notag \\
	& =  \left[ \frac{  18 d^3 \cMb^2   [\Ro(n_0)]^2}{\ei^2 \lambda (\nMid + 1)^{1 - 2\lambda} } \right]^{\frac{1}{2\lambda}} \exp\left[-\frac{\epsilon^2 (\nMid + 1)^{1-2 \lambda}(n_0 + n_1)^{2\lambda}}{ 144 d^3 \cMb^2   [\Ro(n_0)]^2 }\right] \notag
	%
	\end{align}
	where the second inequality is obtained using the facts that $(n_0 + n_1)^{2\lambda} \geq n_0^{2\lambda} \geq 4$ and $1 - 2\lambda \leq 1$ and the last equality is obtained by
	substituting the value of $k_{\nMid}.$ From this, after substituting the given relation between $n_c$ and $n_1,$ the desired result is easy to see.
\end{proof}

\subsection{Detailed Calculations for the Proof of Theorem~\ref{thm: convergence rate}}
We conclude by providing all detailed calculations for our main result, Theorem~\ref{thm: convergence rate}. 

	From Lemma~\ref{lem: decomposition}, by a union bound,
	\[
	\Pr\{\cE^c(n_0, n_1)\} \leq \Pr\{\Emid \} + \Pr\{\Eaft\} \enspace.
	\]
	We now show how to set $n_0$ and $n_1$ so that each of the two terms above is less than $\delta/2.$
	
	Consider the case $\lambda > 1/2.$ Let
	\begin{align}
	N_0(\delta)
	=
	\max\left\{
	\tfrac{K_\lambda6 \|A\| (\|A\| + 2\Km)}{\lambda},
	2^{\frac{1}{\lambda}},
	8d^3\cMb^2\ln\left[\tfrac{ 32d^5\cMb^2}{\delta}\right]
	\right\}
	=& O\left(\ln\tfrac{1}{\delta}\right) \label{eq:N0 order}
	\enspace,
	\end{align}
	\begin{align*}
	N_c(\ei, \delta, n_0) =
	& \max \Bigg\{\left[\left(1  + \tfrac{K_\lambda6  \|A\| (\|A\| + 2\Km)}{\lambda \min\{ \epsilon, \Ro(n_0)\}} \right)\Ro(n_0)\right], \\
	& \hspace{4em} \frac{18 d^3 \cMb^2}{(6\Kl)^{1/\lambda}} \left[\frac{\Ro(n_0)}{\ei}\right]^{2 - \tfrac{1}{\lambda}} \ln\left[ 72 d^5 \cMb^2 \left[\frac{1}{\delta} \right]\left[\frac{\Ro(n_0)}{\ei}\right]^2\right]\Bigg\},
	\end{align*}
	so that $N_c(\ei,\delta,N_0(\delta)) = \tilde{O}\left(\max\left\{\frac{1}{\ei} \ln\left[\frac{1}{\delta}\right], \left[\frac{1}{\ei}\right]^{2- \tfrac{1}{\lambda}} \left[\ln \frac{1}{\delta}\right]^{3 - \tfrac{1}{\lambda}}\right\}\right),$
	and let
	\begin{align*}
	N_1(\ei, \nMid, n_0)
	= (\nMid + 1) \left[ \frac{6\Kl \Ro(n_0)}{\ei} \right]^{1/\lambda} - n_0,
	\end{align*}%
	so that
	\begin{equation}
	\label{eqn:ordN1}
	N_1(\ei, N_c(\ei, \delta, N_0(\delta)), N_0(\delta)) = \tilde{O}\left(\max\left\{\left[\frac{1}{\ei}\right]^{1+ \frac{1}{\lambda}} \left[\ln\frac{1}{\delta}\right]^{1 + \frac{1}{\lambda}}, \left[\frac{1}{\ei}\right]^{2} \left[\ln \frac{1}{\delta}\right]^{3}\right\}\right).
	\end{equation}
	
	%
	%
	Let $n_0 \geq N_0(\delta),$ $\nMid \geq N_c(\ei, \delta, n_0)$ and $n_1 \geq N_1(\ei, \nMid, n_0).$ Then from Lemma~\ref{lem: bound on Emid}, $\Pr\{\Emid \} \leq \delta/2$ and from Lemma~\ref{lem: bound on Eaft}, $\Pr\{\Eaft \} \leq \delta/2.$ Hence $\Pr\{\cE^c(n_0, n_1)\} \leq \delta.$
	Consequently,
	\(
	N(\epsilon,\delta) = N_1(\ei, N_c(\ei, \delta, N_0(\delta)), N_0(\delta))
	\)
	%
	satisfies the desired properties, which completes the proof for $\lambda > 1/2.$
	
	Now consider the case $\lambda < 1/2.$
	The same exact proof can be repeated, with the following $N_0$, $N_c$ and $N_1.$
	\begin{align}
	N_0(\delta)
	=
	\max\left\{
	\tfrac{K_\lambda6 \|A\|(\|A\| + 2\Km)}{\lambda},
	2^{\frac{1}{\lambda}}, \tfrac{64d^3 \cMb^2}{2\lambda} \ln\left(\tfrac{32 d^5 \cMb^2}{\delta \lambda}\right)
	\right\}
	=& O\left( \ln\tfrac{1}{\delta} \right) \enspace, \label{eq: small lambda O}
	\end{align}
	\begin{align*}
	N_c(\ei, \delta, n_0) = & \max \Bigg\{\left[\left(1  + \tfrac{K_\lambda6  \|A\| (\|A\| + 2\Km)}{\lambda \min\{ \epsilon, \Ro(n_0)\}} \right)\Ro(n_0)\right], \\
	& \hspace{4em} \frac{4 d^3 \cMb^2 }{2 \lambda K_\lambda^2 } \ln\left( \frac{72 d^5 \cMb^2 }{\lambda}  \left[\frac{1}{\delta}\right] \frac{  [\Ro(n_0)]^2}{\ei^2 } \right) \Bigg\},
	\end{align*}
	so that $N_c(\ei, \delta, N_0(\delta)) = \tilde{O}\left(\frac{1}{\ei} \ln \frac{1}{\delta}\right) $ and let
	\begin{equation}
	\label{eq: small lambda tilde O}
	N_1(\ei, \nMid, n_0)
	= (\nMid + 1) \left[ \frac{6\Kl \Ro(n_0)}{\ei} \right]^{1/\lambda} - n_0,
	\end{equation}
	so that $N_1(\ei, N_c(\ei, \delta, N_0(\delta)), N_0(\delta)) = \tilde{O}\left(\big[{(1/\epsilon)}\ln{(1/\delta)}\big]^{1+{1/\lambda}}\right).$
	%
	%
	%
	Thus
	\(
	N(\epsilon,\delta)
	=
	N_1(\ei, N_c(\ei, \delta, N_0(\delta)), N_0(\delta))
	\)
	satisfies the desired properties for the case $\lambda<1/2$.

	For $\lambda = 1/2$, the same process can be repeated, resulting in the same $O$ and $\tilde{O}$ results as in \eqref{eq: small lambda O} and \eqref{eq: small lambda tilde O}.

%
%

\end{document}